\newcommand{\ours}{\textsc{MMP}}
\newcommand{\x}{\mathbf{x}}
\newcommand{\z}{\mathbf{z}}
\newcommand{\Z}{\mathbf{Z}}
\newcommand{\proto}{\mathbf{a}}
\newcommand{\histo}{\text{h.}}
\newcommand{\gene}{\text{g.}}
\newcommand{\W}{\mathbf{W}}
\newcommand{\Q}{\mathbf{Q}}
\newcommand{\K}{\mathbf{K}}
\newcommand{\V}{\mathbf{V}}
\newcommand{\Ctot}{C_{\gene} + C_{\histo}}
\newcommand{\set}{\mathbb{S}}
\theoremstyle{plain}
\newtheorem{theorem}{Theorem}[section]
\newtheorem{lemma}[theorem]{Lemma}
\theoremstyle{definition}
\theoremstyle{remark}
\icmltitlerunning{Multimodal Prototyping for Cancer Survival Prediction}
\begin{document}

\twocolumn[
\icmltitle{Multimodal Prototyping for cancer survival prediction}




\icmlsetsymbol{equal}{*}

\begin{icmlauthorlist}
\icmlauthor{Andrew H. Song}{mgb,har}
\icmlauthor{Richard J. Chen}{mgb,har}
\icmlauthor{Guillaume Jaume}{mgb,har}
\icmlauthor{Anurag Vaidya}{mgb,har,mit}
\icmlauthor{Alexander S. Baras}{jhu}
\icmlauthor{Faisal Mahmood}{mgb,har}
\end{icmlauthorlist}

\icmlaffiliation{mgb}{Mass General Brigham, Boston, MA, USA.}
\icmlaffiliation{har}{Harvard Medical School, Boston, MA, USA.}
\icmlaffiliation{mit}{Massachusetts Institute of Technology, Cambridge, MA, USA.}
\icmlaffiliation{jhu}{Johns Hopkins University School of Medicine, Baltimore, MD, USA.}

\icmlcorrespondingauthor{Andrew H. Song}{asong@bwh.harvard.edu}
\icmlcorrespondingauthor{Faisal Mahmood}{faisalmahmood@bwh.harvard.edu}

\icmlkeywords{Machine Learning, ICML}

\vskip 0.3in
]



\printAffiliationsAndNotice{}  

\begin{abstract}

Multimodal survival methods combining gigapixel histology whole-slide images (WSIs) and transcriptomic profiles are particularly promising for patient prognostication and stratification. Current approaches involve tokenizing the WSIs into smaller patches ($>10^4$ patches) and transcriptomics into gene groups, which are then integrated using a Transformer for predicting outcomes. However, this process generates many tokens, which leads to high memory requirements for computing attention and complicates post-hoc interpretability analyses. Instead, we hypothesize that we can: (1) effectively summarize the morphological content of a WSI by condensing its constituting tokens using \textit{morphological} prototypes, achieving more than $300\times$ compression; and (2) accurately characterize cellular functions by encoding the transcriptomic profile with \textit{biological pathway} prototypes, all in an unsupervised fashion. The resulting multimodal tokens are then processed by a fusion network, either with a Transformer or an optimal transport cross-alignment, which now operates with a small and fixed number of tokens without approximations. Extensive evaluation on six cancer types shows that our framework outperforms state-of-the-art methods with much less computation while unlocking new interpretability analyses. The code is available at \url{https://github.com/mahmoodlab/MMP}.

\end{abstract} 

\section{Introduction}
\label{introduction}

Patient prognostication -- the task of predicting the progression of a disease -- is a cornerstone of clinical research and can help identify novel biomarkers indicative of disease progression~\cite{song2023artificial, song2024analysis}. Prognostication is often cast as predicting survival based on a series of assays describing the patient's medical state. Due to the complexity and diverse aspects of prognostication, multimodal approaches that combine histology and omics data, such as transcriptomics~\cite{acosta2022multimodal}, are particularly promising. Histology is represented through whole-slide images (WSIs), which offer a detailed spatial depiction of the tissue, such as a tumor, with resolutions that can exceed $10^5 \times 10^5$ pixels. Differently, transcriptomics is often delineated through bulk RNA sequencing, which provides insights into gene expression. The complementary information in both modalities was shown to be predictive of survival and can be used to inform disease progression~\cite{chen2022pan, lipkova2022artificial, steyaert2023multimodal}. However, the distinct characteristics of each modality present challenges in effectively integrating them together.

WSI modeling is typically done with multiple instance learning (MIL)~\cite{ilse2018attention, campanella2019clinical, lu2021data}. This method involves (1) extracting the set of patches that constitute the WSI ($>10^4$ per WSI), (2) feeding them through a pre-trained patch encoder to generate patch embeddings, and (3) aggregating the patch embeddings with a pooling network. In contrast, transcriptomics modeling can be done using a feed-forward neural network, treating each gene expression as a tabular data entry, or by grouping them into coarse gene families~\cite{chen2021multimodal, zhou2023cross,Xu_2023_ICCV} or \emph{biological pathways}~\cite{elmarakeby2021biologically, jaume2024modeling}. The set of patch embeddings and gene groups can then be seen as tokens, which can be fed to a Transformer to derive a multimodal representation used for outcome prediction. 

However, fusing large numbers of tokens with a Transformer is computationally expensive, and most approaches resort to attention approximation~\cite{shao2021transmil, jaume2024modeling}, cross-attention~\cite{chen2021multimodal,zhou2023cross,Xu_2023_ICCV}, or token subsampling~\cite{wulczyn2020deep, Xu_2023_ICCV}. Even when employing alternatives to Transformers, such as optimal transport (OT) cross-modal alignment~\cite{duan2022multi,pramanick2022multimodal}, addressing a set of tokens remains challenging. The small size of multimodal cohorts, often just a few hundred samples, intensifies this issue, resulting in a Large-p (large input dimensionality), Small-n (small sample size) problem. Moreover, interpreting how thousands of tokens interact and contribute to patient-level prediction, which is crucial for clinical insights, presents a significant challenge. 

Instead, we hypothesize that we can summarize the patch embeddings using \emph{morphological} prototypes. Indeed, due to inherent morphological redundancy in human tissue, the histology patches that constitute the WSI can be assumed as variations of key \emph{morphologies}, \textit{e.g.,} clear cell tumor, necrosis, benign stroma, etc, which we can extract and encode. In molecular pathology, decades of research have identified \emph{biological pathways} that encode specific cellular functions~\cite{liberzon2015molecular, elmarakeby2021biologically}, which we can leverage to define \textit{pathway} prototypes. This drastically reduces the number of tokens before multimodal fusion, thereby opening up possibilities for seamless integration of diverse fusion strategies, with interpretability greatly simplified. The challenge then revolves around the extraction and encoding of meaningful multimodal prototypes.

Here, we introduce a \textbf{M}ulti\textbf{M}odal \textbf{P}rototyping framework for patient prognostication ($\ours$). Inspired by prototype-based aggregation~\cite{mialon2021a, kim2022differentiable, song2024morphological}, we construct an \textit{unsupervised} and \textit{compact} WSI representation with a Gaussian mixture model, where the mixture parameters define the slide summary, each mapping to a morphological prototype (16 to 32 prototypes).
Following existing work~\cite{jaume2024modeling}, we transform transcriptomics into a set of 50 Cancer Hallmark pathway prototypes~\cite{liberzon2015molecular}. With significantly fewer tokens, we show that multimodal Transformers can be readily applied to the joint set of histology and pathway tokens without relying on approximations. In addition, we establish a connection between Optimal Transport (OT) cross-alignment, a popular alternative for cross-modal alignment, and the Transformer cross-attention, thereby unifying both under a single framework. On six cancer cohorts from The Cancer Genome Atlas (TCGA), $\ours$ outperforms nearly all uni- and multimodal baselines with a much smaller number of operations, demonstrating the predictive performance and efficiency of prototype-based approaches. Finally, the tractable number of tokens allows visualization of bi-directional interactions between the morphological and pathway prototypes, different from previous multimodal frameworks relying on uni-directional interpretation. 

To summarize, our contributions are (1) a method for summarizing slides using morphological prototypes and summarizing transcriptomic profiles using established biological pathway prototypes; (2) a unified and memory-efficient multimodal fusion framework; (3) extensive evaluation and ablation experiments on six cancer cohorts highlighting the predictive power of the $\ours$; (4) a novel multimodal patient representation that enables novel interpretability analyses. 

\begin{figure*}[t]
   \centering
   \includegraphics[width=1\linewidth]{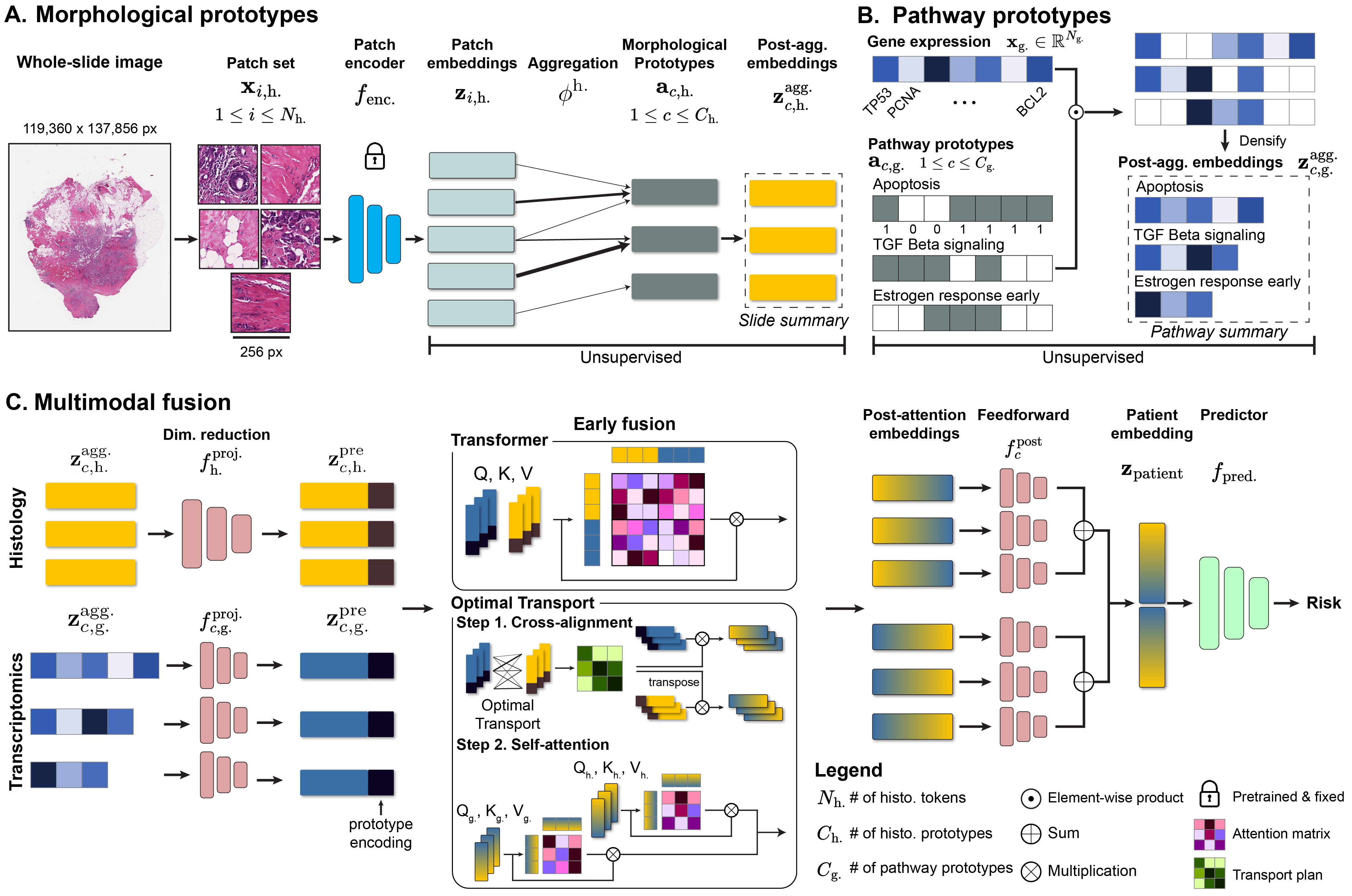}
   \caption{\textbf{Overview of $\ours$}. (A) The tessellated WSI patches (tokens) are projected to low-dimensional embeddings with a pretrained patch encoder. The patch embeddings ($N_\histo>10^4$) are aggregated to \textit{slide summary} using a small set of prototypes ($C_\histo<$32). (B) The transcriptomics data is projected onto a set of binary vectors indicating the presence of specific genes in each pathway, forming \textit{pathway summary}. (C) The post-aggregation embeddings from both modalities are first matched to the same dimension. Cross-modal interactions between histology and transcriptomics are learned with a Transformer or an Optimal Transport, with intra-modal interactions learned with Transformer-based self-attention. The attended embeddings are aggregated to form a patient-level embedding used for risk prediction.
   }
   \label{fig:main}
\end{figure*}

\section{Related Work} \label{related}

\subsection{Representing sets with prototypes}

With NLP and bioinformatics producing more datasets represented as sets, recent approaches have explored representation learning of sets with \textit{prototypes}, \emph{i.e.,} interpretable exemplars that can encode distinct concepts~\cite{snell2017prototypical, lee2019set, mialon2021a, kim2022differentiable, guo2022learning, lee2024selfsupervised}. In computational pathology, AttnMISL~\cite{yao2020whole} and H2T~\cite{VU2023handcrafted} perform K-means clustering within each WSI and use the cluster centroid embeddings as prototype (hard clustering). The proportion of patches in the cluster has also been used to represent the cluster~\cite{quiros2023mapping}. $\ours$ extends $\textsc{PANTHER}$~\cite{song2024morphological} to a multimodal setting and formalizes the prototype-based set representation with a mathematical treatise and generalizes this concept to include Gaussian mixture models, OT, and clustering.

\subsection{Prognostication with multimodal fusion}\label{sec:related_mm}
\textbf{Late fusion.} Early works exploring multimodal survival employed late fusion techniques based on merging unimodal representations, for instance using concatenation or Kronecker product~\cite{chen2020pathomic}.
While initial frameworks incorporated histology with small region-of-interests~\cite{mobadersany2018predicting, wang2021gpdbn}, the development of MIL has enabled slide-level prognostication studies combined with omics~\cite{chen2022pan, ding2023pathology, volinsky2024prediction}. However, late fusion methods are limited in modeling local cross-modal interactions potentially predictive of prognosis. \\
\textbf{Transformer fusion.} Transformers have facilitated progress in multimodal fusion by modeling interactions between cross-modal tokens (or ``early fusion''). To address the computational complexity of dealing with a large number of tokens, token subsampling can be performed~\cite{wulczyn2020deep, Xu_2023_ICCV}, or fusion can be simplified to cross-attention~\cite{chen2021multimodal,jaume2024modeling,zhou2023cross}. Present strategies allow the use of a single Transformer for merging tokens and facilitating early-fusion across various modalities~\cite{jaegle2022perceiver, girdhar2022omnivore, wang2023image, liang2023highmodality, wang2024histogenomic}. 
However, these works remain limited by the large number of histology tokens ($>10^4$). In contrast, $\ours$ does not require approximation due to the prototype-based formulation significantly reducing the number of tokens. A recent work~\cite{zhang2024prototypical} employs prototypes to remove intra- and inter-modality redundancy, but is specific to a time-discretized survival formulation. In contrast, $\ours$ allows flexible survival problem formulation.\\
\textbf{Optimal Transport fusion.} OT-based cross-alignment between sets of multimodal tokens~\cite{chen2020graph, cao2022otkge, duan2022multi, pramanick2023volta} has gained interest as an alternative to Transformers. In multimodal prognosis, MOTCat~\cite{xu2023multimodal} and $\ours$ replace Transformer-based cross-attention with an OT. 

\section{Methods}

We introduce $\ours$, a \textbf{M}ulti\textbf{M}odal \textbf{P}rototyping framework for survival prediction. We describe the construction of morphological and pathway prototypes (Section~\ref{sec:proto_encoding}) and the multimodal fusion mechanism (Section~\ref{sec:both}). We finish by describing survival prediction (Section~\ref{sec:loss}) and prototype-specific designs (Section~\ref{sec:proto_component}).

As for notations, $z$ represents a scalar, $\z$ a vector, and $\Z$ a matrix. For a set of vectors $\{ \z_c\}_{c=1}^C\in\mathbb{R}^d$, $\Z\in\mathbb{R}^{C\times d}$ represents the corresponding matrix with $\z_c$ as the $c^{\text{th}}$ row entry. The notation $\big[ \mathbf{X}, \mathbf{Z} \big]$ is used to indicate concatenation.

\subsection{Prototype-based encoding}\label{sec:proto_encoding}

\subsubsection{Morphological prototypes (Histology)}
\noindent\textbf{Preprocessing.} Given a WSI, we divide it into $N_{\histo}$ non-overlapping patches at 20$\times$ magnification ($0.5 \mu m$/pixel), forming a set of histology patches $\{\x_{i,\histo}\}_{i=1}^{N_\histo}$ of varying cardinality,  with typically $N_\histo > 10^4$. Each $\x_{i,\histo}$ is then mapped to a low-dimensional embedding using a pretrained patch encoder $f_{\text{enc}}(\cdot)$, such that $\z_{i,\histo}= f_{\text{enc}}(\x_{i,\histo})\in\mathbb{R}^D$.\\
\noindent\textbf{Aggregation.} The standard practice for WSI-based outcome prediction uses Multiple Instance Learning (MIL) to aggregate patch embeddings into a slide embedding, where patch embeddings $\set_{\histo}=\{\z_{i,\histo} \}_{i=1}^{N_\histo}$ are pooled using a learnable function $\phi^{\histo}(\cdot): \set_{\histo} \rightarrow \mathbb{R}^D$, and forms a post-aggregation slide embedding $\z_{\histo}^{\text{agg.}}=\phi^{\histo}(\set_\histo)$.
In the prototype-based approach, we take an alternate route by defining $C_\histo$ aggregation functions $\phi_{c}^{\histo}(\cdot):\set_{\histo}\rightarrow \mathbb{R}^{d_\histo}$, such that $\z_{c,\histo}^{\text{agg.}}=\phi_{c}^{\histo}(\set_{\histo})$, where $C_\histo$ is the number of prototypes. This produces a set $\set_{\text{slide}}=\{\z_{c,\histo}^{\text{agg.}}\}_{c=1}^{C_\histo}$, referred to as \emph{slide summary}.\\
\noindent\textbf{Prototypes.} We define \emph{prototypes}, denoted as $\{\proto_{c,\histo}\}_{c=1}^{C_\histo}$, $\proto_{c,\histo}\in\mathbb{R}^{d_\histo}$, such that each prototype exemplifies a unique morphology from the training set. Specifically, we apply K-means clustering on \emph{all} the patch embeddings from the training dataset to extract the $C_\histo$ cluster centroids. The prototypes $\{\proto_{c,\histo} \}_{c=1}^{C_{\histo}}$ are then used as parameters for the prototype-specific aggregation function $\phi_{c}^{\histo}$ (\textbf{Fig.~\ref{fig:main}A}).\\
\noindent\textbf{Slide summary.} Given the prototype $\mathbf{a}_{c,\histo}$ and patch embeddings $\set_{\histo}$, we can express $\phi_{c}^{\histo}$ as 
\begin{equation}\label{eq:histo_agg}
    \z_{c,\histo}^{\text{agg.}} = \phi_{c}^{\histo}(\set_{\histo}, \proto_{c,\histo}) = \sum_{i=1}^{N_\histo} g(\z_{i,\histo}, \proto_{c,\histo}),\,\,\forall c.
\end{equation}
This allows the large variable-length set of patch embeddings to be represented using a small fixed-length set comprised of prototypical tokens. We use $C_\histo \leq 32$ with typically $N_\histo > 10^4$, achieving more than 300$\times$ reduction. Eq.~\ref{eq:histo_agg} implies that aggregation is performed by summing the contribution from all embeddings in $\set_{\histo}$, the exact manner of which is determined by the mapping $g$. 

We explore three strategies for defining $g$:
hard clustering (HC), OT, and our preferred choice, Gaussian Mixture Models (GMM). We briefly explain GMM, similar to the developments in $\textsc{PANTHER}$~\cite{song2024morphological}, and defer the detailed explanations of other strategies in \textbf{Appendix~\ref{sec:histo_agg}}.\\
\noindent\textbf{GMM-based slide summarization.} With GMM as the generative model, the probability distribution for $\z_{i,\histo}$ is
\begin{equation}
    \begin{split}
   p(\z_{i,\histo};\theta) &= \sum_{c=1}^{C_{\histo}} p(c_i=c)\cdot p(\z_{i,\histo}| c_i=c)\\
   &= \sum_{c=1}^{C_{\histo}}\pi_c \cdot \mathcal{N}(\z_{i,\histo}; \boldsymbol{\mu}_c, \Sigma_c),\\ 
\end{split}
\end{equation}
where $c_i$ denotes the mixture identity of $\z_{i,\histo}$ and $\theta=\{\pi_c,\boldsymbol{\mu}_c, \Sigma_c\}_{c=1}^{C_\histo}$ denotes mixture probability, mean, and diagonal covariance. Intuitively, $\boldsymbol{\mu}_c$ and $\pi_c$ represent a morphological exemplar and the proportion of similar patterns in WSI, respectively. The posterior distribution $p(c_i=c|\z_{i,\histo})$ indirectly represents the distance between $\z_{i,\histo}$ and $\proto_{c,\histo}$ and consequently its contribution towards each element of the \textit{slide summary}. We obtain the maximum-likelihood estimate, $\widehat{\theta}=\arg\max_{\theta} \sum_{i=1}^{N_\histo} \log p(\z_{i,\histo};\theta)$, via expectation-maximization (EM), which can be performed as a feedforward network operation~\cite{dempster1977maximum, kim2022differentiable}. 

The estimated GMM parameters are concatenated to form a post-aggregation embedding, $\z_{c,\histo}^{ \text{agg.}}=[\widehat{\pi}_c,\widehat{\boldsymbol{\mu}}_c, \widehat{\Sigma}_c]\in\mathbb{R}^{d_\histo}$ with $d_\histo=2D+1$. During EM-based inference, the prototypes $\mathbf{a}_{c,\histo}$ are used as the initial parameters for the mixture means, $\boldsymbol{\mu}_c^{(0)}=\mathbf{a}_{c,\histo}$ with $\Sigma_c$ as the identity matrix. Owing to GMM's soft clustering nature, the distribution $p(c_i=c|\z_{i,\histo})$ is non-zero, implying that all elements of $\set_{\histo}$ contribute to $\z_{c,\histo}^{\text{agg.}}$ (Eq.~\ref{eq:histo_agg}). We note that deriving the \emph{slide summary} ($\set_{\text{slide}}$ or $\Z_{\histo}^{\text{agg.}}\in\mathbb{R}^{C_\histo\times d_\histo}$) from the patch embeddings is done in an \emph{unsupervised} manner, and drastically reduces the input size to the multimodal fusion model. 

\subsubsection{Pathway prototypes (Genomics)}\label{sec:gene}
We aim to define a similar compact prototypical representation of the transcriptomic profile. The transcriptomic profile spans $N_{\gene}$ gene expressions for each tissue, and is described as $\{x_{i,\gene}\}_{i=1}^{N_{\gene}}$, with $x_{i,\gene}\in\mathbb{R}$.\\
\noindent\textbf{Prototypes.} We tokenize gene expression into biological pathway prototypes, \emph{i.e.,} into groups of genes that interact in certain ways to implement previously described cellular processes~\cite{liberzon2015molecular, reimand2019pathway}. Unlike histology, the number $C_\gene$ (\textit{e.g.,} $C_\gene$=50 for Hallmark pathways) and composition of prototypes is fixed and can be defined using existing biological pathway databases. 

We define the prototypes as $\{\proto_{c,\gene} \}_{c=1}^{C_{\gene}}$, where the binary vector $\proto_{c,\gene}\in\{0,1\}^{N_\gene}$ with 1 and 0 indicates the presence and absence of a specific gene in the pathway $c$.\\
\noindent\textbf{Pathway summary.} Denoting $N_{c,\gene}$ as the number of genes in pathway $c$, we can construct $\z_{c, \gene}^{\text{agg.}}\in \mathbb{R}^{N_{c,\gene}}$, and \textit{pathway summary} $\set_{\text{path.}}=\{\z_{c, \gene}^{\text{agg.}}\}_{c=1}^{C_\gene}$ as, 
\begin{equation}
        \z_{c,\gene}^{\text{agg.}} = \phi^{\gene}(\x_{\gene}, \proto_{c,\gene}) = R(\x_{\gene} \odot \proto_{c,\gene})\in \mathbb{R}^{N_{c,\gene}},
\end{equation}
where $\x_g\in\mathbb{R}^{N_\gene}$ is the vector representation of $\{x_{i,g}\}_{i=1}^{N_{\gene}}$, $\odot$ denotes element-wise multiplication, and $R$ densifies the pathway representation by removing zero elements (\textbf{Fig.~\ref{fig:main}B}). In our work, $N_g \simeq 3\times 10^4$ and $N_{c,\gene} < 200$, achieving more than $20\times$ reduction.

To summarize, morphological and pathway prototypes are used to extract a \emph{slide summary} $\set_{\text{slide}}$ and a \emph{pathway summary} $\set_{\text{path.}}$. The morphological prototypes are defined in the patch embedding space with fixed-length, $\z_{c,\histo}^{\text{agg.}}\in\mathbb{R}^{d_\histo}$, and encode distinct morphological attributes. Differently, the pathway prototypes are defined in the raw data space with variable-length $\z_{c,\gene}^{\text{agg.}}\in\mathbb{R}^{N_{c,\gene}}$, and encode specific biological pathways.
We note that both approaches are \textit{unsupervised} and thus not require patient outcomes.   

\subsection{Multimodal fusion}\label{sec:both}
\subsubsection{Token dimension matching} Prior to multimodal fusion, we first match the dimensions of tokens from each modality. We use a linear projection $\z_{c,\histo}^{\text{pre}} = f_{\histo}^{\text{pre}}(\z_{c,\histo}^{\text{agg.}})\in\mathbb{R}^d$ for histology. For pathways, we use an MLP or self-normalizing neural networks (SNN)~\cite{Klambauer2017self} $f_{c, \gene}^{\text{pre}}$ per prototype to map variable-length representations to a common length, $\z_{c,\gene}^{\text{pre}}  = f_{c, \gene}^{\text{pre}}(\z_{c,\gene}^{\text{agg.}})\in\mathbb{R}^d$. The parameters of $f_{\histo}^{\text{pre}}$ and $\{f_{c, \gene}^{\text{pre}}\}_{c=1}^{C_{\gene}}$ are learned for each downstream task.

\subsubsection{Multimodal fusion} Inspired by multimodal early fusion methods, we learn dense intra- and cross-modal interactions between the histology and pathway tokens. We explore two strategies: Transformer attention and OT cross-alignment (\textbf{Fig.~\ref{fig:main}C}). \\
\textbf{Transformer attention.}
We introduce three learnable query, key, value matrices $\W_Q, \W_K, \W_V\in \mathbb{R}^{d\times d}$.  
Denoting $\mathbf{Q}=(
        \Q_{\gene}^{\text{T}} \hspace{1.2mm}
        \Q_{\histo}^{\text{T}})^{\text{T}}=\big(
        \Z_\gene^{\text{pre},\text{T}} \hspace{1.2mm}
        \Z_\histo^{\text{pre},\text{T}}\big)^{\text{T}}\W_Q\in\mathbb{R}^{(C_{\gene}+C_{\histo})\times d}$, 
and likewise for $\K$ and $\V$,
we can define the standard Transformer attention~\cite{vaswani2017attention, xu2023multimodal}
\begin{equation}\label{eq:transformer}
\begin{split}
\Z^{\text{post}}_{\gene+\histo}&=\begin{pmatrix}
    \Z^{\text{post}}_{\gene}\\
    \Z^{\text{post}}_{\histo}
\end{pmatrix} =\sigma\left(\frac{\Q\K^{\text{T}}}{\sqrt{d}}\right)\V\in\mathbb{R}^{(C_\gene+C_\histo)\times d}\\
    &=\sigma\left(\frac{1}{\sqrt{d}}\begin{pmatrix} \Q_\gene \K_\gene^{\text{T}} \hspace{1mm} \Q_\gene \K_\histo^{\text{T}} \\ 
\Q_\histo \K_\gene^{\text{T}} \hspace{1mm} \Q_\histo \K_\histo^{\text{T}} \\ 
\end{pmatrix}\right)\begin{pmatrix}
        \V_{\gene} \\
        \V_{\histo}
    \end{pmatrix},\\
\end{split}
\end{equation}
where $\sigma(\cdot)$ denotes row-wise softmax. Eq.~\ref{eq:transformer} illustrates how multimodal attention can be decomposed into the intra-modal self-attention ($\gene\rightarrow\gene$, $\histo\rightarrow\histo$) and cross-modal cross-attention ($\gene\rightarrow\histo$, $\histo\rightarrow\gene$). In $\ours$, the complexity of computing attention is simplified to $\mathcal{O}((\Ctot)^2)$, a considerable reduction from $\mathcal{O}((N_{\gene}+N_\histo)^2)$ in most multimodal fusion methods that do not use prototyping.

\textbf{Optimal Transport cross-alignment} Modeling cross-modal interactions can also be approached from the point of view of OT, where we aim to learn the transport plan $\mathbf{T}\in\mathbb{R}_{+}^{C_\gene\times C_{\histo}}$ with the minimal total cost between the empirical distributions $\hat{p}(\z_{\gene}^{\text{pre}})=\frac{1}{C_\gene}\sum_{c=1}^{C_\gene}\delta(\z_{c,\gene}^{\text{pre}})$ and $\hat{p}(\z_{\histo}^{\text{pre}})=\frac{1}{C_\histo}\sum_{c'=1}^{C_\histo}\delta(\z_{c',\histo}^{\text{pre}})$, and
where $\delta(\cdot)$ is a delta function. The pairwise cost $\mathbf{D}_{c,c'}$ between the two tokens is typically computed using a $L_2$ distance or negative dot product.
The estimate $\widehat{\mathbf{T}}$ is given as the solution to the entropic-regularized OT problem~\cite{Kolouri2017Optimal},
\begin{equation}
\begin{split}
    &\min_{\mathbf{T}}\sum_{c,c'}\mathbf{D}_{c,c'}\cdot \mathbf{T}_{c,c'} +\varepsilon \mathbf{T}_{c,c'}\log \mathbf{T}_{c,c'}\\
    &\,\,\text{s.t.} \sum_{c=1}^{C_\gene}\mathbf{T}_{c,c'}=1/C_{\histo} \quad\text{and}\quad \sum_{c'=1}^{C_{\histo}}\mathbf{T}_{c,c'}=1/C_\gene,\\
\end{split}
\end{equation}
where $\varepsilon$ is the regularization parameter. The optimal plan $\widehat{\mathbf{T}}$ can be obtained with the Sinkhorn algorithm~\cite{cuturi2013sinkhorn}, which can be differentiated~\cite{genevay18learning}. This enables joint learning of $f_{\histo}^{\text{pre}}$ and $\{f_{c, \gene}^{\text{pre}}\}_{c=1}^{C_\gene}$ along with the plan $\widehat{\mathbf{T}}$. Cross-alignment with $\widehat{\mathbf{T}}$, \textit{i.e.}, $\widehat{\mathbf{T}}\Z_{\histo}^{\text{pre}}$, performs $\histo\rightarrow\gene$ attention, while $\widehat{\mathbf{T}}^{\text{T}}$ performs $\gene\rightarrow\histo$ attention, $\widehat{\mathbf{T}}^{\text{T}}\Z_{\gene}^{\text{pre}}$. After the alignment,  we learn intra-modal interactions using the Transformer self-attention. Denoting $\Q_\gene=(\widehat{\mathbf{T}}\Z_{\histo}^{\text{pre}})\W_Q$ and $\Q_\histo=(\widehat{\mathbf{T}}^{\text{T}}\Z_{\gene}^{\text{pre}})\W_Q$, and likewise for $\K_\gene,\V_\gene, \K_\histo, \V_\histo$, we obtain
\begin{equation}
    \Z_\histo^{\text{post}}=\sigma\left(\frac{\Q_\histo\K_\histo^{\text{T}}}{\sqrt{d}}\right)\V_\histo,\,\Z_\gene^{\text{post}}=\sigma\left(\frac{\Q_\gene\K_\gene^{\text{T}}}{\sqrt{d}}\right)\V_\gene.
\end{equation}

\subsubsection{Connection between transformer and Optimal transport cross-alignment}\label{sec:theory}

The Transformer cross-attention and the OT cross-alignment exhibit similarities in the way attention and the transport plan are being modeled.
We can formalize these similarities to demonstrate the connection between the two. Specifically, we show that the Transformer cross-attention is similar to OT cross-alignment, under certain conditions.
\begin{lemma}
\label{lem:mot}
Let $\Z_\gene\in\mathbb{R}^{C_\gene\times d}$ and $\Z_\histo\in\mathbb{R}^{C_\histo\times d}$ be the matrix representation of the token sets $\{\z_{i,\gene} \}_{i=1}^{C_\gene}$ and $\{\z_{k,\histo} \}_{k=1}^{C_\histo}$. Let $\Z_{\gene}\W_Q^{\text{T}}\in\mathbb{R}^{C_\gene\times d}$ and $\Z_{\histo}\W^{\text{T}}\in\mathbb{R}^{C_\histo\times d}$ be the linear projections of both sets. Let $\widehat{\mathbf{T}}\in\mathbb{R}^{C_\gene\times C_\histo}_{+}$ be the optimal transport plan, i.e., the solution to the entropic-regularized, unbalanced optimal transport problem between the two projected sets. Then, $\widehat{\mathbf{T}}$ is equivalent to the Transformer cross-attention matrix, $\sigma(\Z_\gene\W_Q^{\text{T}}\W\Z_\histo^{\text{T}}/\sqrt{d})$, up to a multiplicative factor where $\sigma(\cdot)$ denotes row-wise softmax, $\{\W_Q\z_{i,\gene}\}_{i=1}^{C_\gene}$ are queries, and $\{\W\z_{k,\histo}\}_{k=1}^{C_\histo}$ are keys.
\end{lemma}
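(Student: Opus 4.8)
The plan is to exploit the known closed form of the entropic-regularized OT solution and show that, in the \emph{unbalanced} (semi-relaxed) regime where only one marginal is enforced, the optimal plan collapses to a single row-wise softmax whose logits coincide with the scaled dot-product attention scores. First I would fix the ground cost to be the negative dot product of the projected tokens, $\mathbf{D}_{c,c'} = -(\Z_\gene\W_Q^{\text{T}}\W\Z_\histo^{\text{T}})_{c,c'}$, matching the attention-logit convention, and write the entropic objective $\langle \mathbf{D},\mathbf{T}\rangle + \varepsilon\sum_{c,c'}\mathbf{T}_{c,c'}\log\mathbf{T}_{c,c'}$. Since this objective is strictly convex in $\mathbf{T}$ over the (affine, nonempty) feasible set, any stationary point of the Lagrangian is the unique global minimizer, so it suffices to solve the first-order conditions.

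The crux of the argument is the unbalanced relaxation: I would drop the column (histology) marginal constraint and retain only the row (genomics) marginal $\sum_{c'}\mathbf{T}_{c,c'} = 1/C_\gene$. Introducing a multiplier $\alpha_c$ per row and setting $\partial L/\partial \mathbf{T}_{c,c'} = 0$ yields $\mathbf{T}_{c,c'} = \exp\!\big((\alpha_c - \mathbf{D}_{c,c'})/\varepsilon - 1\big)$, which is automatically nonnegative, so the positivity constraints are inactive and require no separate KKT treatment. Substituting this form into the single row constraint determines $\alpha_c$ and gives $\mathbf{T}_{c,c'} = \tfrac{1}{C_\gene}\cdot \exp(-\mathbf{D}_{c,c'}/\varepsilon)\big/\sum_{c''}\exp(-\mathbf{D}_{c,c''}/\varepsilon)$, i.e., $\tfrac{1}{C_\gene}$ times the row-wise softmax of $-\mathbf{D}/\varepsilon$.

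Finally I would identify the entropic temperature with the attention scale by taking $\varepsilon = \sqrt{d}$, so that $-\mathbf{D}/\varepsilon = \Z_\gene\W_Q^{\text{T}}\W\Z_\histo^{\text{T}}/\sqrt{d}$ and hence $\widehat{\mathbf{T}} = \tfrac{1}{C_\gene}\,\sigma\!\big(\Z_\gene\W_Q^{\text{T}}\W\Z_\histo^{\text{T}}/\sqrt{d}\big)$, which is precisely the claimed equality up to the multiplicative factor $1/C_\gene$. The main obstacle is conceptual rather than computational: one must justify that relaxing the column marginal (rather than solving the full balanced problem) is the correct formulation, since the balanced entropic plan requires both dual scalings $\alpha_c,\beta_{c'}$ and the full Sinkhorn fixed point, which does \emph{not} reduce to a single softmax. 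Making precise the sense in which the penalty on the column marginal is sent to zero, so that its dual variable vanishes and the column scaling becomes trivial, is the delicate step; once that modeling choice is fixed, the remainder is a routine stationarity computation.
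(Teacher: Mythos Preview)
Your proposal is correct and follows essentially the same route as the paper's proof: fix the cost as the negative dot product of the projected tokens, relax to the semi-relaxed (unbalanced) problem retaining only the row marginal $\sum_{c'}\mathbf{T}_{c,c'}=1/C_\gene$, solve the Lagrangian stationarity condition to obtain a row-wise softmax, and set $\varepsilon=\sqrt{d}$ to recover the attention matrix up to the factor $1/C_\gene$. Your added remarks on strict convexity and the inactivity of the positivity constraints are sound refinements, and your identification of the unbalanced relaxation as the key modeling choice is exactly the point the paper makes when it writes the problem with $\lambda_1\to 0$, $\lambda_2\to\infty$.
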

\begin{proof}
    The detailed derivation can be found in \textbf{Appendix~\ref{sec:proof}}.
\end{proof}

This lays the groundwork for $\ours$ to integrate both approaches within a single framework, rather than regarding them as fundamentally distinct approaches. This offers a platform for future innovations in multimodal strategies.

\subsection{Survival prediction}\label{sec:loss}
The post-attention embeddings are subject to a sequence of post-attention feedforward network $f^{\text{post}}$ with layer normalization (LN), averaging within each modality, and concatenation to form a patient-level embedding 
$\z_{\text{patient}} = \Big[\sum_{c=1}^{C_{\gene}} \operatorname{LN}(f^{\text{post}}(\z_{c,\gene}^{\text{post}})), \sum_{c=1}^{C_{\histo}} \operatorname{LN}(f^{\text{post}}(\z_{c,\histo}^{\text{post}}))\Big]$.
The resulting embedding is fed through a linear predictor $f_{\text{pred.}}$ for patient-level risk prediction.

We use the Cox proportional hazards loss~\cite{cox1972regression, katzman2018deepsurv, carmichael2022incorporating}, which requires training in batches to preserve the risk order within a patient group. Due to the large $N_\histo$, it is computationally challenging to form a batch for non-prototype approaches. \textbf{Appendix~\ref{section:loss_supp}} provides a detailed explanation of losses.

\subsection{Enhancing prototypes}\label{sec:proto_component}
Given that the identities of the prototypes remain consistent across patients -- \emph{e.g.,} prototype $c$ consistently represents the same morphological concept or pathway -- we can additionally inject this property into model design considerations. Specifically, we incorporate (1) a prototype-specific encoding and (2) a post-attention feed-forward network.

Prototype encoding, denoted as $\mathbf{e}_c$, can be connected to modality-specific encodings~\cite{jaegle2022perceiver, liang2023highmodality}. Specifically, we append the encodings to the embeddings before feeding them to the fusion network. We experiment with two approaches: 1) fixed one-hot encoding $\mathbf{e}_c \in \{0,1\}^{d_e}$ with $d_e=\Ctot$ and 2) random-initialized and learnable embedding $\mathbf{e}_c \in \mathbb{R}^{d_e}$ with $d_e=32$. The modified embeddings are then given as $\z_{c,\histo}^{\text{pre}}=[\z_{c,\histo}^{\text{pre}}, \mathbf{e}_c]\in\mathbb{R}^{d+d_e}$, and same for $\z_{c,\gene}^{\text{pre}}$. 

We also employ prototype-specific feedforward network (FFN) $f_{c}^{\text{post}}$ to the post-attention embeddings to learn additional nonlinearity per prototype. This differs from previous works that share $f^{\text{post}}$, which might limit expressivity. These components cannot be used for non-prototype frameworks, since 1) the patch identity is not preserved across patients and 2) large $N_\histo$ makes the use of $\mathbf{e}_c$ and $f_{c}^{\text{post}}$ infeasible.

\begin{table*}[!ht]
\centering
\small
\caption{\textbf{Survival prediction} Results for $\ours$ and other baselines for measuring disease-specific survival with C-Index. The clinical baseline includes age, sex, and cancer grade as reported in the TCGA cohort. We use the same histology feature encoder, UNI, a ViT-L/16 model pretrained on an internal histology dataset~\cite{chen2024towards}. All histology prototype-based methods share the same set of morphological prototypes with $C_{\histo}=16$. Standard deviation is reported over five runs. m.p. and p.p. denote morphological prototype and pathway prototype, respectively. The best and second-best performances are denoted by \textbf{bold} and \underline{underlined}, respectively.}
\begin{tabular}{ll|c|c|c|c|c|c|c|c|c}
\toprule
&\textbf{Dataset} & m.p. & p.p.  & BRCA & BLCA & LUAD & STAD & CRC & KIRC & Avg. ($\uparrow$)\\
\midrule
& Clinical &  & & 0.563{\tiny $\pm0.055$} & 0.570{\tiny $\pm0.033$} & 0.528{\tiny $\pm0.028$} & 0.592{\tiny $\pm0.044$} & 0.655{\tiny $\pm0.119$} & 0.602{\tiny $\pm0.066$} & 0.585  \\
\midrule
\parbox[t]{0mm}{\multirow{2}{*}{\rotatebox[origin=c]{90}{{\textbf{gene}}}}}
& Gene exp. &  & & 0.638{\tiny $\pm0.090$} & 0.627{\tiny $\pm0.055$} & 0.577{\tiny $\pm0.057$} & 0.562{\tiny $\pm0.083$} & 0.588{\tiny $\pm0.105$} & 0.681{\tiny $\pm0.072$} & 0.612  \\
& Pathways & & $\checkmark$ &0.615{\tiny $\pm0.054$} & 0.606{\tiny $\pm0.084$} & 0.626{\tiny $\pm0.077$} & 0.566{\tiny $\pm0.080$} & 0.590{\tiny $\pm0.104$} & 0.681{\tiny $\pm0.090$}  & 0.614\\
\midrule
\parbox[t]{0mm}{\multirow{6}{*}{\rotatebox[origin=c]{90}{{\textbf{histology}}}}} 
& ABMIL &&& 0.570{\tiny $\pm0.086$} & 0.550{\tiny $\pm0.039$} & 0.571{\tiny $\pm0.036$}  & 0.559{\tiny $\pm0.059$} & \underline{0.660}{\tiny $\pm0.096$} & 0.684{\tiny $\pm0.115$}  & 0.599 \\
& TransMIL &&& 0.601{\tiny $\pm0.110$} & 0.584{\tiny $\pm0.057$} & 0.547{\tiny $\pm0.054$} & 0.487{\tiny $\pm0.057$} & 0.555{\tiny $\pm0.059$} & 0.678{\tiny $\pm0.191$}  & 0.575 \\
& AttnMISL & $\checkmark$ && 0.599{\tiny $\pm0.117$} & 0.493{\tiny $\pm0.064$} & 0.627{\tiny $\pm0.076$} & 0.533{\tiny $\pm0.040$} & \textbf{0.728}{\tiny $\pm0.110$} & 0.648{\tiny $\pm0.102$}  & 0.605\\
& IB-MIL &&& 0.511{\tiny $\pm0.068$} & 0.524{\tiny $\pm0.051$} & 0.578{\tiny $\pm0.067$} & 0.525{\tiny $\pm0.061$} & 0.576{\tiny $\pm0.129$} & 0.702{\tiny $\pm0.081$}  & 0.569\\
& ILRA &&& 0.597{\tiny $\pm0.124$} & 0.581{\tiny $\pm0.055$} & 0.511{\tiny $\pm0.077$} & 0.550{\tiny $\pm0.094$} & 0.643{\tiny $\pm0.124$} & 0.651{\tiny $\pm0.164$}  & 0.589\\
& $\ours$ & $\checkmark$ &&0.669{\tiny $\pm0.119$} & 0.593{\tiny $\pm0.062$} & 0.600{\tiny $\pm0.039$} & 0.488
{\tiny $\pm0.093$}& 0.646{\tiny $\pm0.111$} & 0.701{\tiny $\pm0.177$}  & 0.611\\
\midrule
\parbox[t]{0mm}{\multirow{6}{*}{\rotatebox[origin=c]{90}{{\textbf{Multimodal}}}}}
& MCAT && $\checkmark$ & 0.648{\tiny $\pm0.100$} & 0.619{\tiny $\pm0.048$} & 0.615{\tiny $\pm0.072$} & 0.528{\tiny $\pm0.114$} & 0.578{\tiny $\pm0.136$} & 0.670{\tiny $\pm0.235$} & 0.610\\
& SurvPath  & & $\checkmark$ & 0.709{\tiny $\pm0.062$} & 0.619{\tiny $\pm0.052$} & 0.612{\tiny $\pm0.060$} & 0.556{\tiny $\pm0.136$} & 0.539{\tiny $\pm0.150$} &0.738{\tiny $\pm0.131$}  & 0.629 \\
& MOTCat && $\checkmark$ & 0.717{\tiny $\pm0.029$} & 0.622{\tiny $\pm0.064$} & 0.589{\tiny $\pm0.059$} & 0.561{\tiny $\pm0.075$} & 0.590{\tiny $\pm0.130$} & 0.708{\tiny $\pm0.104$}  & 0.631 \\
& CMTA && $\checkmark$  & 0.687{\tiny $\pm0.077$} & 0.605{\tiny $\pm0.076$} & 0.622{\tiny $\pm0.059$} & 0.547{\tiny $\pm0.088$} & 0.559{\tiny $\pm0.195$} & 0.720{\tiny $\pm0.124$}  & 0.623\\
& \textbf{$\ours_{\text{OT}}$} & \checkmark & $\checkmark$  & \textbf{0.753}{\tiny $\pm0.069$} & \underline{0.628}{\tiny $\pm0.064$} & \textbf{0.643}{\tiny $\pm0.013$} & \underline{0.580}{\tiny $\pm0.071$} & 0.636{\tiny $\pm0.120$} & \textbf{0.748}{\tiny $\pm0.099$} &  \textbf{0.665}\\
& \textbf{$\ours_{\text{Trans.}}$} & \checkmark & $\checkmark$  & \underline{0.738}{\tiny $\pm0.069$} & \textbf{0.635}{\tiny $\pm0.051$} & \underline{0.642}{\tiny $\pm0.037$} & \textbf{0.598}{\tiny $\pm0.051$} & 0.630{\tiny $\pm0.125$} & \underline{0.747{\tiny $\pm0.106$}}  & \textbf{0.665}\\

\bottomrule
\end{tabular}
\label{tab:survival_main}
\end{table*}

\section{Experiments}
\subsection{Datasets} We use publicly available The Cancer Genome Atlas (TCGA) to evaluate $\ours$ across six cancer types: Bladder urothelial carcinoma (BLCA) ($n=359$), Breast invasive carcinoma (BRCA) ($n=868$), Lung adenocarcinoma (LUAD) ($n=412$), Stomach adenocarcinoma (STAD) ($n=318$), Colon and Rectum adenocarcinoma (CRC) ($n=296$), and Kidney renal clear cell carcinoma (KIRC) ($n=340$). We train the models to predict risks for disease-specific survival (DSS)~\cite{liu2018integrated}. Following standard practice, we use 5-fold site-stratified cross-validation to mitigate batch effect~\cite{howard2021impact}. We evaluate $\ours$ with the concordance index (C-Index), which measures the concordance between the ordering based on patients' survival days and the predicted risks.

Log-2 transformed transcripts per million bulk RNA sequencing expression for all TCGA cohorts is accessed through UCSC Xena database \cite{goldman2020visualizing}. The $C_\gene=50$ Hallmark gene sets from the Molecular Signatures Database (MSigDB) \cite{subramanian2005gene,liberzon2015molecular} are used to select and organize genes into biological pathways. Hallmark gene sets (pathways) represent well-defined biological states in cancer. After organizing genes into Hallmark pathways, we obtained 4,241 unique genes across the 50 pathways, with a minimum and maximum pathway size of 31 and 199, respectively. More dataset details can be found in \textbf{Appendix~\ref{sec:dataset}}.

\subsection{Baselines}
\noindent \textbf{Histology.} We employ Attention-based MIL (ABMIL)~\cite{ilse2018attention}, ABMIL with information bottleneck (ABMIL-IB)~\cite{li2023task}, Transformer-based MIL (TransMIL)~\cite{shao2021transmil}, low-rank MIL (ILRA)~\cite{xiang2023exploring}, and prototype-based MIL (AttnMISL)~\cite{yao2020whole}. We also use the unimodal version of $\ours$.\\
\noindent \textbf{Transcriptomics.} We employ a feed-forward neural network (2-layer MLP) (non-prototype) and a baseline with pathway-specific SNNs~\cite{jaume2024modeling, zhang2024prototypical}, followed by concatenation.\\ 
\noindent \textbf{Multimodal.} We use MCAT~\cite{chen2021multimodal}, SurvPath~\cite{jaume2024modeling}, MOTCat~\cite{Xu_2023_ICCV}, and CMTA~\cite{zhou2023cross}, which all use multimodal tokenization to derive histology and omics tokens (pathways in our evaluation), followed by co-attention Transformer.\\ 
\noindent \textbf{$\ours$ variants.} We test $\ours$ with a Transformer cross-attention ($\ours_{\text{Trans.}}$) and OT cross-alignment ($\ours_{\text{OT}}$). The rest of the model comprises GMM histology aggregation with $C_\histo=16$, learnable random prototype encoding, and prototype-specific feedforward networks.

For the patch encoder, we use UNI~\cite{chen2024towards}, a DINOv2-based ViT-Large~\cite{dosovitskiy2021image, oquab2023dinov2} pretrained on $1\times 10^8$ patches sampled across $1\times 10^5$ WSIs from Mass General Brigham. We also ablate with CTransPath~\cite{wang2022transformer}, a Swin Transformer pretrained on $3.2\times 10^4$ WSIs from the TCGA, and ResNet50 pretrained on Imagenet~\cite{deng2009imagenet}. Further information on all baselines can be found in \textbf{Appendix~\ref{section:baselines}}.

\subsection{Implementation}

All models are trained with a $1\times 10^{-4}$ learning rate with cosine decay scheduler, AdamW optimizer, and $1\times 10^{-5}$ weight decay for 20 epochs. $\ours$ uses the Cox loss with a batch size of 64. Non-prototype baselines are trained with the NLL survival loss~\cite{zadeh2020bias} with a batch size of 1. During training, in MCAT, SurvPath, MOTCat, and CMTA, we randomly sample 4,096 patches per WSI to increase diversity and reduce memory. During inference, the whole WSI is used. All prototype baselines use $C_\histo=16$. 

\section{Results}

\subsection{Survival prediction}
The results are shown in \textbf{Table~\ref{tab:survival_main}}. Overall, $\ours$ outperforms all baselines (+5.4\% and +7.8\% avg. over the next-best multimodal and unimodal models) and ranks within top-2 for 5 out of 6 diseases. We highlight the main findings.

\textbf{Comparison with clinical baseline.} All multimodal baselines perform superior to the clinical baseline comprised of important prognostic variables -- age, sex, and cancer grade~\cite{bonnier1995age, rakha2010breast, tas2013age}. This demonstrates the clinical potential of multimodal frameworks for enhanced patient prognostication. Additional univariate clinical baselines are in \textbf{Appendix~\ref{sec:clinical}}.\\
\textbf{Unimodal vs. Multimodal.} All multimodal baselines (excluding MCAT) outperform the unimodal baselines (histology and transcriptomics). This aligns with previous multimodal literature showing that histology and transcriptomics contain complementary information to be leveraged for better prognostication. 
In addition for CRC, we observe unimodal histology baselines outperforming multimodal baselines, indicating that challenges remain in multimodal training dynamics of histology-omic models~\cite{gat2020removing,wang2020makes}.\\
\textbf{Prototypes vs. non-prototypes.} $\ours$ significantly outperforms all multimodal approaches that are based on prototyping  (+5.4\% avg. over the next-best model, MOTCat). While every multimodal baseline utilizes early fusion, MCAT and MOTCat learn only the uni-directional cross-modal interaction from transcriptomics to histology. Conversely, SurvPath omits histology-to-histology interactions in self-attention computation to reduce computational requirements. Overall, we attribute the superior performance of $\ours$ to our ability to 1) retain and encode morphological information predicted of prognosis in the morphological prototypes, 2) model both the intra- and cross-modal interactions without approximations, and 3) employ the Cox survival loss. The quality of the morphological prototypes is reaffirmed in the unimodal setting, where the prototype-based AttnMISL and the unimodal $\ours$ are the two best-performing models, outperforming all other approaches. \\
\textbf{Transformer vs. OT-based cross-attention.} We observe that the performance of $\ours_{\text{Trans.}}$ and $\ours_{\text{OT}}$ are on the same level, empirically confirming the connection between both approaches highlighted in Section~\ref{sec:theory}.

\subsection{Risk stratification}
We perform log-rank tests~\cite{bland2004logrank} between the high-risk and low-risk cohorts,  stratified at 50\% percentile of the risks predicted by $\ours$ and MOTCat, the next-best performing model. Specifically, we aggregate the predicted risks across all test folds to construct the cohort-level risk set. \textbf{Table~\ref{tab:risk}} shows the p-values for the log-rank tests on all 6 cancer types. With a statistical significance threshold of 0.05, we observe that $\ours$ can significantly stratify the high- and low-risk groups for all 6 cancer types, whereas MOTCat was significant for 3 cancer types. This demonstrates the strength of MMP for risk stratification over other baselines and reaffirms its clinical potential. 

\begin{table}[!ht]
\centering
\small
\caption{\textbf{Risk stratification.} We report log-rank p-values for high- and low-risk patient cohorts for $\ours$ and MOTCat. p-values below 0.05 are considered statistically significant.} 
\begin{tabular}{l|c|c|c}
\toprule
& BRCA & BLCA & LUAD \\
\midrule
MOTCAT & $6.16\times 10^{-5}$  & $8.60\times 10^{-5}$ & $9.65\times 10^{-1}$\\
\textbf{$\ours_{\text{Trans.}}$} & $3.08\times 10^{-5}$ & $4.50 \times 10^{-2}$ & $8.37\times 10^{-5}$\\
\midrule
& STAD & CRC & KIRC \\
\midrule
MOTCAT & $5.70\times 10^{-2}$ & $7.04\times 10^{-1}$ & $2.40\times 10^{-4}$\\
\textbf{$\ours_{\text{Trans.}}$} & $1.40 \times 10^{-2}$ & $3.60\times 10^{-2}$ & $2.59\times 10^{-8}$\\
\bottomrule
\end{tabular}
\label{tab:risk}
\end{table}

\subsection{Ablation study}
We perform extensive ablations of $\ours$ (\textbf{Table~\ref{tab:survival_ablation}}). We summarize our findings below. 
(1) \textbf{Number of morphological prototypes}: Larger number of morphological prototypes ($C_\histo$) generally yields better performance up to 16, then performance stagnates. To facilitate easier interpretation with fewer exemplars, we set $C_\histo=16$. 
(2) \textbf{Feature encoder}: UNI, a DINOv2-pretrained ViT-L encoder, substantially improves compared to CTransPath and ResNet50 pretrained on ImageNet. This underscores the importance of a powerful vision encoder trained on large histology datasets.
(3) \textbf{Histology aggregation}: Aggregation based on a GMM yields the best performance over optimal transport (OT) and hard clustering (HC). We hypothesize that this is due to GMM explicitly capturing sufficient statistics of patch embedding distribution, \textit{e.g.,} mixture probability and covariance, which other approaches cannot readily integrate.
(4) \textbf{Prototype encoding}: Adding prototype encoding, $\mathbf{e}_c$ and $f_c^{\text{post}}$, leads to better performance. This suggests the benefits of a prototype-specific measure that leverages the consistent prototype identity.
(5) \textbf{Fusion stage}: Early-fusion of tokens via cross-attention ($\ours$) outperforms late-fusion, which concatenates the self-attended embeddings averaged within each modality, without capturing cross-modal interactions.

We also perform unimodal $\ours$ ablations in \textbf{Appendix~\ref{sec:histology_ablations}}, to isolate the impact of histology-related design choices.

\begin{table}[!ht]
\centering
\small
\caption{\textbf{Ablation study.} C-Index and its change against $\ours$ as a single model component is modified, averaged across six cohorts.} 
\begin{tabular}{l|lcc|c}
\toprule
Ablation & \multicolumn{3}{c|}{Model} & Avg. \\
\midrule
\textbf{Full model} & \multicolumn{3}{c|}{$\ours$} & \textbf{0.665}\\
\midrule
Number of  & \multirow{2}{*}{$C_\histo=16$} & \multirow{2}{*}{$\Rightarrow$} & $C_\histo=8$ & 0.655 {\scriptsize ($-$1.5\%)}\\
histo. proto.& & & $C_\histo=32$ & 0.662 {\scriptsize ($-$0.5\%)}\\
\midrule
Histo.  & \multirow{2}{*}{UNI} & \multirow{2}{*}{$\Rightarrow$} & ResNet50 & 0.620 {\scriptsize ($-$6.8\%)}\\
enc. $f_{\text{enc}}$ & & & CTransPath & 0.643 {\scriptsize ($-$3.3\%)}\\
\midrule
Histo.  & \multirow{2}{*}{GMM} & \multirow{2}{*}{$\Rightarrow$} & OT & 0.658 {\scriptsize ($-$1.1\%)}\\
agg. $\phi_{c}^{\histo}$ & & & HC & 0.629 {\scriptsize ($-$5.4\%)}\\
\midrule
Proto.  & \multirow{2}{*}{random} & \multirow{2}{*}{$\Rightarrow$} & None & 0.652 {\scriptsize ($-$2.0\%)}\\
embed. $\mathbf{e}_c$ & & & One-hot & 0.660 {\scriptsize ($-$0.8\%)}\\
\midrule
FFN $f_c^{\text{post}}$  & Indiv. & $\Rightarrow$  & Shared & 0.658 {\scriptsize ($-$1.1\%)}\\
\midrule
Co-attention  & Trans. & $\Rightarrow$ & OT & \textbf{0.665} {\scriptsize ($-$0.0\%)}\\
\midrule
Fusion  & Early & $\Rightarrow$ & Late & 0.646 {\scriptsize ($-$2.9\%)}\\
\bottomrule
\end{tabular}
\label{tab:survival_ablation}
\end{table}

\subsection{Loss function}

We evaluate the performance of $\ours$ trained using either the Cox or the NLL loss (\textbf{Table~\ref{tab:survival_loss}} in \textbf{Appendix~\ref{sec:surv_ablation}}). The NLL loss has widely been used for multimodal frameworks as it accommodates training with a batch of a single patient, a necessity for managing many tokens. Conversely, the Cox loss requires a batch size greater than a single patient, which involves ordering patients within the batch. Applying both losses is feasible due to the reduced computational requirements in $\ours$. We observe that the Cox loss surpasses NLL loss overall (average C-Index of 0.665 \emph{vs}. 0.644). Furthermore, increasing the batch size (bs) with NLL leads to enhanced performance (0.621 with bs=1 \emph{vs.} 0.644 with bs=16), emphasizing the benefit of the reduced tokens.

\subsection{Computational complexity}
To assess the computational benefits of $\ours$, we measure the number of floating-point operations (FLOPs) for cross-attention baselines (\textbf{Table~\ref{tab:complexity}}). $\ours$ achieves at least 5$\times$ fewer giga-FLOPS, demonstrating the superior efficiency of prototyping. We observe that the aggregation  ($\ours_{\text{agg.}}$), which maps $N_\histo$ tokens to $C_\histo$ prototypes, constitutes most of the $\ours$ operations, with the fusion ($\ours_{\text{fusion}}$) requiring significantly less due to condensed token set.

\begin{table}[!ht]
\centering
\small
\caption{\textbf{Computational complexity.} Average number of tokens per WSI and average number of giga-FLOPs per patient.}
\begin{tabular}{l|c|c|c|c}
\toprule
&\multicolumn{2}{|c|}{LUAD} & \multicolumn{2}{c}{KIRC} \\
\cline{2-5}
& tokens & GFLOPs $(\downarrow)$ & tokens & GFLOPs $(\downarrow)$\\
\midrule
MCAT & 4,714 & 2.10 & 12,802 & 5.49 \\
SurvPath & 4,714 & 2.00 & 12,802 & 5.41 \\
CMTA & 4,714 & 17.2 & 12,802 & 40.1\\
\midrule
$\ours_{\text{agg.}}$ & 4,714 & 0.309 & 12,802 & 0.839\\
$\ours_{\text{fusion}}$ & 16 & 0.025 & 16 & 0.025\\
\textbf{$\ours_{\text{total}}$} & $\cdot$ & \textbf{0.334} & $\cdot$ & \textbf{0.864}\\
\bottomrule
\end{tabular}
\label{tab:complexity}
\end{table}

\begin{figure*}[t]
   \centering
   \includegraphics[width=1\linewidth]{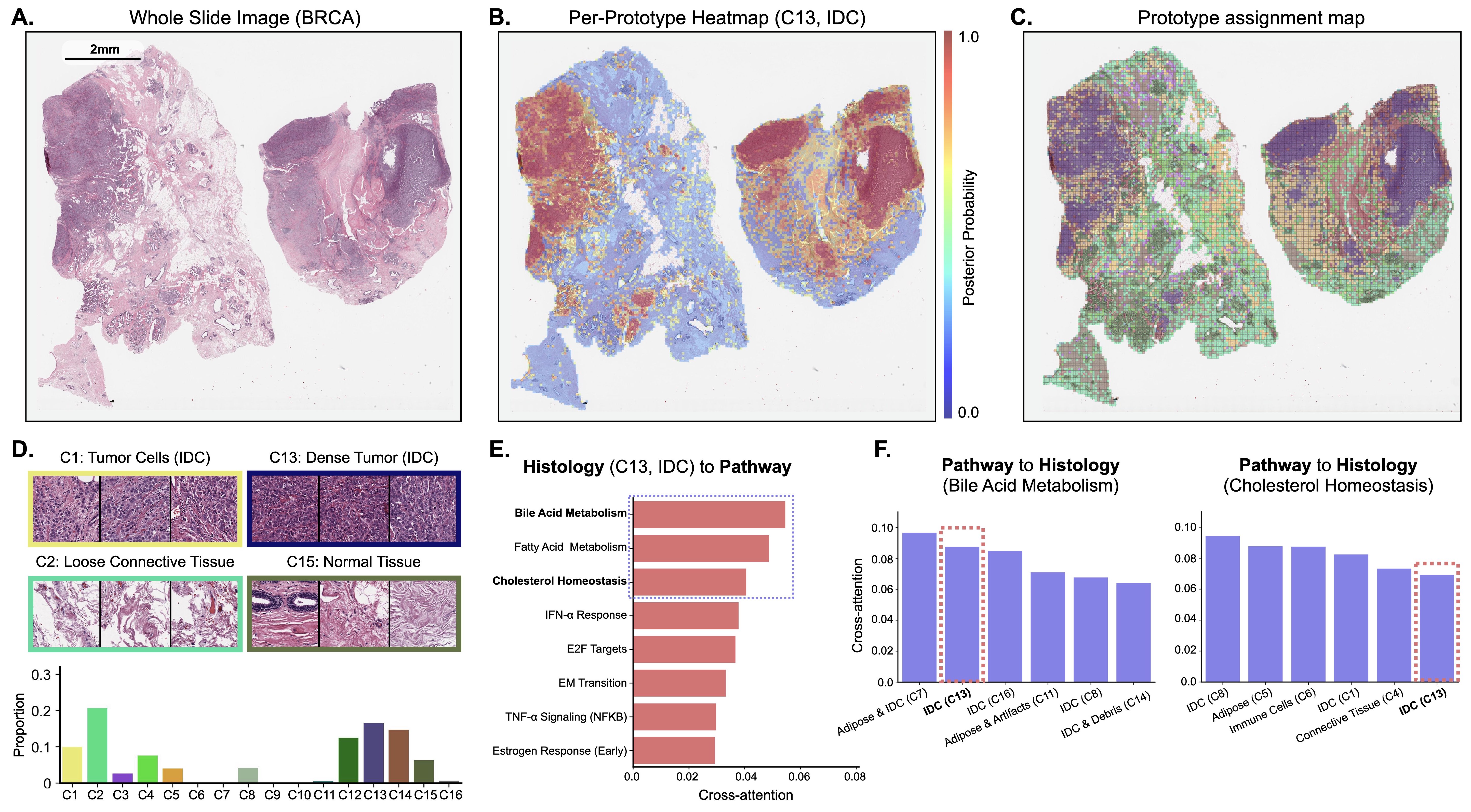}
   \vspace{-5mm}
   \caption{\textbf{Cross-modal interaction visualization}. (A) A WSI for a BRCA patient. (B) The morphological prototype heatmap for $c=13$ (C13), representing \textit{invasive ductal carcinoma} (IDC), based on the posterior distribution for C13. (C) Prototype assignment map showing the closest morphological prototype for each patch in the WSI. (D) Top-3 patches for each morphological prototype and proportion of each prototype in the WSI. (E) The top-10 pathways attending to C13. (F) Top-6 morphological prototypes attending to the pathways in (E).}
   \label{fig:heatmap}
\end{figure*}

\section{Interpretability}
\textbf{Unimodal}: As WSIs are represented with a compact set of 16 prototypes in $\ours$, we can directly visualize a prototype heatmap for prototype $c$ that corresponds to the most similar patches, by relying on the posterior $p(c_i=c|\z_{i,\histo})$ (\textbf{Fig.~\ref{fig:heatmap}A, B}, with additional examples in \textbf{Appendix~\ref{sec:heatmaps}}). The prototype assignment map can clearly show how all the prototypes are distributed in a given WSI (\textbf{Fig.~\ref{fig:heatmap}C}). For a  prototype $c$, we can also visualize the most representative patches, by querying the patch embeddings closest to $\widehat{\boldsymbol{\mu}}_c$, and its proportion in the WSI, with $\widehat{\pi}_c$ (\textbf{Fig.~\ref{fig:heatmap}D}). 

\textbf{Multimodal}: With a tractable number of histology tokens, we can visualize the cross-modal attention interactions based on cross-attention scores, from histology to pathways ($\histo\rightarrow\gene$) and pathways to histology ($\gene\rightarrow\histo$). In contrast to MCAT and MOTCat, which only model and visualize $\gene\rightarrow\histo$ interactions, \textit{i.e.,} which patches correspond to the queried pathway (histology importance), $\ours$ can also visualize $\histo\rightarrow\gene$ interactions, \textit{i.e.,} which pathways correspond to the queried prototype (pathway importance). While SurvPath also models $\histo\rightarrow\gene$ via cross-attention, since the histology patch tokens are redundant and not prototypical, visualizing $\histo\rightarrow\gene$ is intractable.

As an example, for the prototype $c=$13 (C13), which represents the dominant \textit{invasive ductal carcinoma} morphology in the BRCA WSI, we can visualize its highly-attended pathways ($\histo\rightarrow\gene$) -- bile acid metabolism, fatty acid metabolism, and cholesterol homeostasis, being important oncogenic pathways in BRCA (\textbf{Fig.~\ref{fig:heatmap}E}). This agrees with the literature that highlights the association between these pathways and breast cancer prognosis~\cite{nelson2014cholesterol, koundouros2020reprogramming, revzen2022role}. We can also visualize the highly-attended morphological prototypes for these pathways ($\gene\rightarrow\histo$), with C13 highly attended by bile acid metabolism (\textbf{Fig.~\ref{fig:heatmap}F}). Other IDC variations, such as C1 and C8, are also highly attended by these pathways. By virtue of bi-directional visualization capability, $\ours$ can elucidate tightly-linked relationships, characterized by strong bi-directional cross-attention values (C13 and bile acid metabolism), which is a unique capability over other methods that have only visualized (g. $\rightarrow$ h.) (\textbf{Fig.~\ref{fig:heatmap}F}) but not (h. $\rightarrow$ g.) (\textbf{Fig.~\ref{fig:heatmap}E}). Further discussion and visualizations are available in the \textbf{Appendix~\ref{sec:heatmaps}}.

\section{Conclusion \& Future works}
We introduced $\ours$, a prototype-based multimodal fusion framework for survival prediction in computational pathology. This framework introduces a prototype-based tokenization method that effectively reduces the number of tokens and the associated computational complexity common in multimodal fusion frameworks. Such reduction leads to improved overall prognostic performance and allows a bi-directional concept-based interpretation of how morphology and transcriptomes interact. 

We consider this an essential step forward for future multimodal prognosis research, which we believe can be extended and validated in different ways (also further detailed in \textbf{Appendix \ref{sec:app_limitations}}). First, the number of prototypes can be determined in a data-driven manner, \textit{e.g.,} using frameworks in Dirichlet processes~\cite{Lee2020A, li2022deep}. Next, instead of relying on shallow MLP for transcriptomics modeling, we can leverage the latest advances in single-cell foundation models~\cite{rosen2023universal, theodoris2023transfer, cui2024scgpt}. Finally, a validation with different outcomes, such as progression-free interval and recurrence risk~\cite{liu2018integrated}, as well as application to rare diseases for which not overfitting to a small cohort is paramount, will bring $\ours$ closer to clinical translation.

\clearpage
\section*{Acknowledgements}
The authors would like to thank Minyoung Kim at Samsung AI Center for the helpful advice on prototype-based aggregation; Ming Y. Lu and Tong Ding for setting up supervised MIL benchmarks. The authors acknowledge funding support from the Brigham and Women’s Hospital (BWH) President’s Fund, Mass General Hospital (MGH) Pathology and by
the National Institute of Health (NIH) National Institute of General Medical Sciences (NIGMS) through R35GM138216. 

\section*{Impact Statement}

This manuscript details efforts to enhance cancer prognosis through the integration of whole-slide imaging and gene expression profiling. Our study utilizes data from The Cancer Genome Atlas public database, which has been ethically and institutionally approved by all contributing sites. 
Although our research may yield societal impacts, it is important to emphasize that this study is designed solely for research applications and not yet intended for clinical use. Further larger external cohort validation will be required for realizing the clinical potential of our work.


\nocite{langley00}
\bibliography{main}

\begin{thebibliography}{102}
\providecommand{\natexlab}[1]{#1}
\providecommand{\url}[1]{\texttt{#1}}
\expandafter\ifx\csname urlstyle\endcsname\relax
  \providecommand{\doi}[1]{doi: #1}\else
  \providecommand{\doi}{doi: \begingroup \urlstyle{rm}\Url}\fi

\bibitem[Acosta et~al.(2022)Acosta, Falcone, Rajpurkar, and Topol]{acosta2022multimodal}
Acosta, J.~N., Falcone, G.~J., Rajpurkar, P., and Topol, E.~J.
\newblock {Multimodal biomedical AI}.
\newblock \emph{Nature Medicine}, 28\penalty0 (9):\penalty0 1773--1784, 2022.

\bibitem[Benamou(2003)]{benamou2003numerical}
Benamou, J.-D.
\newblock Numerical resolution of an “unbalanced” mass transport problem.
\newblock \emph{ESAIM: Mathematical Modelling and Numerical Analysis}, 37\penalty0 (5):\penalty0 851--868, 2003.

\bibitem[Bland \& Altman(2004)Bland and Altman]{bland2004logrank}
Bland, J.~M. and Altman, D.~G.
\newblock The logrank test.
\newblock \emph{Bmj}, 328\penalty0 (7447):\penalty0 1073, 2004.

\bibitem[Bonnier et~al.(1995)Bonnier, Romain, Charpin, Lejeune, Tubiana, Martin, and Piana]{bonnier1995age}
Bonnier, P., Romain, S., Charpin, C., Lejeune, C., Tubiana, N., Martin, P.-M., and Piana, L.
\newblock Age as a prognostic factor in breast cancer: relationship to pathologic and biologic features.
\newblock \emph{International journal of cancer}, 62\penalty0 (2):\penalty0 138--144, 1995.

\bibitem[Campanella et~al.(2019)Campanella, Hanna, Geneslaw, Miraflor, Werneck Krauss~Silva, Busam, Brogi, Reuter, Klimstra, and Fuchs]{campanella2019clinical}
Campanella, G., Hanna, M.~G., Geneslaw, L., Miraflor, A., Werneck Krauss~Silva, V., Busam, K.~J., Brogi, E., Reuter, V.~E., Klimstra, D.~S., and Fuchs, T.~J.
\newblock Clinical-grade computational pathology using weakly supervised deep learning on whole slide images.
\newblock \emph{Nature medicine}, 25\penalty0 (8):\penalty0 1301--1309, 2019.

\bibitem[Cao et~al.(2022)Cao, Xu, Yang, He, Cao, and Huang]{cao2022otkge}
Cao, Z., Xu, Q., Yang, Z., He, Y., Cao, X., and Huang, Q.
\newblock Otkge: Multi-modal knowledge graph embeddings via optimal transport.
\newblock \emph{Advances in Neural Information Processing Systems}, 35:\penalty0 39090--39102, 2022.

\bibitem[Carmichael et~al.(2022)Carmichael, Song, Chen, Williamson, Chen, and Mahmood]{carmichael2022incorporating}
Carmichael, I., Song, A.~H., Chen, R.~J., Williamson, D.~F., Chen, T.~Y., and Mahmood, F.
\newblock Incorporating intratumoral heterogeneity into weakly-supervised deep learning models via variance pooling.
\newblock In \emph{International Conference on Medical Image Computing and Computer-Assisted Intervention}, pp.\  387--397. Springer, 2022.

\bibitem[Chen et~al.(2020{\natexlab{a}})Chen, Gan, Cheng, Li, Carin, and Liu]{chen2020graph}
Chen, L., Gan, Z., Cheng, Y., Li, L., Carin, L., and Liu, J.
\newblock Graph optimal transport for cross-domain alignment.
\newblock In \emph{International Conference on Machine Learning}, pp.\  1542--1553. PMLR, 2020{\natexlab{a}}.

\bibitem[Chen et~al.(2020{\natexlab{b}})Chen, Lu, Wang, Williamson, Rodig, Lindeman, and Mahmood]{chen2020pathomic}
Chen, R.~J., Lu, M.~Y., Wang, J., Williamson, D.~F., Rodig, S.~J., Lindeman, N.~I., and Mahmood, F.
\newblock Pathomic fusion: an integrated framework for fusing histopathology and genomic features for cancer diagnosis and prognosis.
\newblock \emph{IEEE Transactions on Medical Imaging}, 41\penalty0 (4):\penalty0 757--770, 2020{\natexlab{b}}.

\bibitem[Chen et~al.(2021)Chen, Lu, Weng, Chen, Williamson, Manz, Shady, and Mahmood]{chen2021multimodal}
Chen, R.~J., Lu, M.~Y., Weng, W.-H., Chen, T.~Y., Williamson, D.~F., Manz, T., Shady, M., and Mahmood, F.
\newblock Multimodal co-attention transformer for survival prediction in gigapixel whole slide images.
\newblock In \emph{Proceedings of the IEEE/CVF International Conference on Computer Vision}, pp.\  4015--4025, 2021.

\bibitem[Chen et~al.(2022)Chen, Lu, Williamson, Chen, Lipkova, Noor, Shaban, Shady, Williams, Joo, et~al.]{chen2022pan}
Chen, R.~J., Lu, M.~Y., Williamson, D.~F., Chen, T.~Y., Lipkova, J., Noor, Z., Shaban, M., Shady, M., Williams, M., Joo, B., et~al.
\newblock Pan-cancer integrative histology-genomic analysis via multimodal deep learning.
\newblock \emph{Cancer Cell}, 40\penalty0 (8):\penalty0 865--878, 2022.

\bibitem[Chen et~al.(2024)Chen, Ding, Lu, Williamson, Jaume, Song, Chen, Zhang, Shao, Shaban, Williams, Oldenburg, Weishaupt, Wang, Vaidya, Le, Gerber, Sahai, Williams, and Mahmood]{chen2024towards}
Chen, R.~J., Ding, T., Lu, M.~Y., Williamson, D. F.~K., Jaume, G., Song, A.~H., Chen, B., Zhang, A., Shao, D., Shaban, M., Williams, M., Oldenburg, L., Weishaupt, L.~L., Wang, J.~J., Vaidya, A., Le, L.~P., Gerber, G., Sahai, S., Williams, W., and Mahmood, F.
\newblock Towards a general-purpose foundation model for computational pathology.
\newblock \emph{Nature Medicine}, 2024.

\bibitem[Chizat et~al.(2018)Chizat, Peyr{\'e}, Schmitzer, and Vialard]{chizat2018scaling}
Chizat, L., Peyr{\'e}, G., Schmitzer, B., and Vialard, F.-X.
\newblock Scaling algorithms for unbalanced optimal transport problems.
\newblock \emph{Mathematics of Computation}, 87\penalty0 (314):\penalty0 2563--2609, 2018.

\bibitem[Cox(1972)]{cox1972regression}
Cox, D.~R.
\newblock Regression models and life-tables.
\newblock \emph{Journal of the Royal Statistical Society: Series B (Methodological)}, 34\penalty0 (2):\penalty0 187--202, 1972.

\bibitem[Cui et~al.(2024)Cui, Wang, Maan, Pang, Luo, Duan, and Wang]{cui2024scgpt}
Cui, H., Wang, C., Maan, H., Pang, K., Luo, F., Duan, N., and Wang, B.
\newblock {scGPT: toward building a foundation model for single-cell multi-omics using generative AI}.
\newblock \emph{Nature Methods}, pp.\  1--11, 2024.

\bibitem[Cuturi(2013)]{cuturi2013sinkhorn}
Cuturi, M.
\newblock Sinkhorn distances: Lightspeed computation of optimal transport.
\newblock \emph{Advances in neural information processing systems}, 26, 2013.

\bibitem[dan Guo et~al.(2022)dan Guo, Tian, Zhang, Zhou, and Zha]{guo2022learning}
dan Guo, D., Tian, L., Zhang, M., Zhou, M., and Zha, H.
\newblock Learning prototype-oriented set representations for meta-learning.
\newblock In \emph{International Conference on Learning Representations}, 2022.

\bibitem[Dempster et~al.(1977)Dempster, Laird, and Rubin]{dempster1977maximum}
Dempster, A.~P., Laird, N.~M., and Rubin, D.~B.
\newblock Maximum likelihood from incomplete data via the em algorithm.
\newblock \emph{Journal of the royal statistical society: series B (methodological)}, 39\penalty0 (1):\penalty0 1--22, 1977.

\bibitem[Deng et~al.(2009)Deng, Dong, Socher, Li, Li, and Fei-Fei]{deng2009imagenet}
Deng, J., Dong, W., Socher, R., Li, L.-J., Li, K., and Fei-Fei, L.
\newblock Imagenet: A large-scale hierarchical image database.
\newblock In \emph{2009 IEEE conference on computer vision and pattern recognition}, pp.\  248--255. {IEEE}, 2009.

\bibitem[Ding et~al.(2023)Ding, Zhou, Metaxas, and Zhang]{ding2023pathology}
Ding, K., Zhou, M., Metaxas, D.~N., and Zhang, S.
\newblock Pathology-and-genomics multimodal transformer for survival outcome prediction.
\newblock In \emph{International Conference on Medical Image Computing and Computer-Assisted Intervention}, pp.\  622--631. Springer, 2023.

\bibitem[Dongre \& Weinberg(2019)Dongre and Weinberg]{dongre2019new}
Dongre, A. and Weinberg, R.~A.
\newblock New insights into the mechanisms of epithelial--mesenchymal transition and implications for cancer.
\newblock \emph{Nature reviews Molecular cell biology}, 20\penalty0 (2):\penalty0 69--84, 2019.

\bibitem[Dosovitskiy et~al.(2021)Dosovitskiy, Beyer, Kolesnikov, Weissenborn, Zhai, Unterthiner, Dehghani, Minderer, Heigold, Gelly, Uszkoreit, and Houlsby]{dosovitskiy2021image}
Dosovitskiy, A., Beyer, L., Kolesnikov, A., Weissenborn, D., Zhai, X., Unterthiner, T., Dehghani, M., Minderer, M., Heigold, G., Gelly, S., Uszkoreit, J., and Houlsby, N.
\newblock An image is worth 16x16 words: Transformers for image recognition at scale.
\newblock In \emph{International Conference on Learning Representations}, 2021.

\bibitem[Duan et~al.(2022)Duan, Chen, Tran, Yang, Xu, Zeng, and Chilimbi]{duan2022multi}
Duan, J., Chen, L., Tran, S., Yang, J., Xu, Y., Zeng, B., and Chilimbi, T.
\newblock Multi-modal alignment using representation codebook.
\newblock In \emph{Proceedings of the IEEE/CVF Conference on Computer Vision and Pattern Recognition}, pp.\  15651--15660, 2022.

\bibitem[Elmarakeby et~al.(2021)Elmarakeby, Hwang, Arafeh, Crowdis, Gang, Liu, AlDubayan, Salari, Kregel, Richter, et~al.]{elmarakeby2021biologically}
Elmarakeby, H.~A., Hwang, J., Arafeh, R., Crowdis, J., Gang, S., Liu, D., AlDubayan, S.~H., Salari, K., Kregel, S., Richter, C., et~al.
\newblock Biologically informed deep neural network for prostate cancer discovery.
\newblock \emph{Nature}, 598\penalty0 (7880):\penalty0 348--352, 2021.

\bibitem[Gat et~al.(2020)Gat, Schwartz, Schwing, and Hazan]{gat2020removing}
Gat, I., Schwartz, I., Schwing, A., and Hazan, T.
\newblock Removing bias in multi-modal classifiers: Regularization by maximizing functional entropies.
\newblock \emph{Advances in Neural Information Processing Systems}, 33:\penalty0 3197--3208, 2020.

\bibitem[Genevay et~al.(2018)Genevay, Peyre, and Cuturi]{genevay18learning}
Genevay, A., Peyre, G., and Cuturi, M.
\newblock Learning generative models with sinkhorn divergences.
\newblock In \emph{Proceedings of the Twenty-First International Conference on Artificial Intelligence and Statistics}, volume~84 of \emph{Proceedings of Machine Learning Research}, pp.\  1608--1617. PMLR, 09--11 Apr 2018.

\bibitem[Girdhar et~al.(2022)Girdhar, Singh, Ravi, van~der Maaten, Joulin, and Misra]{girdhar2022omnivore}
Girdhar, R., Singh, M., Ravi, N., van~der Maaten, L., Joulin, A., and Misra, I.
\newblock Omnivore: A single model for many visual modalities.
\newblock In \emph{Proceedings of the IEEE/CVF Conference on Computer Vision and Pattern Recognition}, pp.\  16102--16112, 2022.

\bibitem[Giudetti et~al.(2019)Giudetti, De~Domenico, Ragusa, Lunetti, Gaballo, Franck, Simeone, Nicolardi, De~Nuccio, Santino, et~al.]{giudetti2019specific}
Giudetti, A.~M., De~Domenico, S., Ragusa, A., Lunetti, P., Gaballo, A., Franck, J., Simeone, P., Nicolardi, G., De~Nuccio, F., Santino, A., et~al.
\newblock A specific lipid metabolic profile is associated with the epithelial mesenchymal transition program.
\newblock \emph{Biochimica et Biophysica Acta (BBA)-Molecular and Cell Biology of Lipids}, 1864\penalty0 (3):\penalty0 344--357, 2019.

\bibitem[Goldman et~al.(2020)Goldman, Craft, Hastie, Repe{\v{c}}ka, McDade, Kamath, Banerjee, Luo, Rogers, Brooks, et~al.]{goldman2020visualizing}
Goldman, M.~J., Craft, B., Hastie, M., Repe{\v{c}}ka, K., McDade, F., Kamath, A., Banerjee, A., Luo, Y., Rogers, D., Brooks, A.~N., et~al.
\newblock Visualizing and interpreting cancer genomics data via the xena platform.
\newblock \emph{Nature biotechnology}, 38\penalty0 (6):\penalty0 675--678, 2020.

\bibitem[Harrell et~al.(1982)Harrell, Califf, Pryor, Lee, and Rosati]{harrell1982evaluating}
Harrell, F.~E., Califf, R.~M., Pryor, D.~B., Lee, K.~L., and Rosati, R.~A.
\newblock Evaluating the yield of medical tests.
\newblock \emph{Jama}, 247\penalty0 (18):\penalty0 2543--2546, 1982.

\bibitem[Howard et~al.(2021)Howard, Dolezal, Kochanny, Schulte, Chen, Heij, Huo, Nanda, Olopade, Kather, et~al.]{howard2021impact}
Howard, F.~M., Dolezal, J., Kochanny, S., Schulte, J., Chen, H., Heij, L., Huo, D., Nanda, R., Olopade, O.~I., Kather, J.~N., et~al.
\newblock The impact of site-specific digital histology signatures on deep learning model accuracy and bias.
\newblock \emph{Nature communications}, 12\penalty0 (1):\penalty0 4423, 2021.

\bibitem[Ilse et~al.(2018)Ilse, Tomczak, and Welling]{ilse2018attention}
Ilse, M., Tomczak, J., and Welling, M.
\newblock Attention-based deep multiple instance learning.
\newblock In \emph{International conference on machine learning}, pp.\  2127--2136. PMLR, 2018.

\bibitem[Ishay-Ronen et~al.(2019)Ishay-Ronen, Diepenbruck, Kalathur, Sugiyama, Tiede, Ivanek, Bantug, Morini, Wang, Hess, et~al.]{ishay2019gain}
Ishay-Ronen, D., Diepenbruck, M., Kalathur, R. K.~R., Sugiyama, N., Tiede, S., Ivanek, R., Bantug, G., Morini, M.~F., Wang, J., Hess, C., et~al.
\newblock Gain fat—lose metastasis: converting invasive breast cancer cells into adipocytes inhibits cancer metastasis.
\newblock \emph{Cancer cell}, 35\penalty0 (1):\penalty0 17--32, 2019.

\bibitem[Jaegle et~al.(2022)Jaegle, Borgeaud, Alayrac, Doersch, Ionescu, Ding, Koppula, Zoran, Brock, Shelhamer, Henaff, Botvinick, Zisserman, Vinyals, and Carreira]{jaegle2022perceiver}
Jaegle, A., Borgeaud, S., Alayrac, J.-B., Doersch, C., Ionescu, C., Ding, D., Koppula, S., Zoran, D., Brock, A., Shelhamer, E., Henaff, O.~J., Botvinick, M., Zisserman, A., Vinyals, O., and Carreira, J.
\newblock Perceiver {IO}: A general architecture for structured inputs \& outputs.
\newblock In \emph{International Conference on Learning Representations}, 2022.

\bibitem[Jaume et~al.(2024)Jaume, Vaidya, Chen, Williamson, Liang, and Mahmood]{jaume2024modeling}
Jaume, G., Vaidya, A., Chen, R., Williamson, D., Liang, P., and Mahmood, F.
\newblock Modeling dense multimodal interactions between biological pathways and histology for survival prediction.
\newblock \emph{Proceedings of the IEEE/CVF Conference on Computer Vision and Pattern Recognition (CVPR)}, 2024.

\bibitem[Katzman et~al.(2018)Katzman, Shaham, Cloninger, Bates, Jiang, and Kluger]{katzman2018deepsurv}
Katzman, J.~L., Shaham, U., Cloninger, A., Bates, J., Jiang, T., and Kluger, Y.
\newblock Deepsurv: personalized treatment recommender system using a cox proportional hazards deep neural network.
\newblock \emph{BMC medical research methodology}, 18\penalty0 (1):\penalty0 1--12, 2018.

\bibitem[Kim(2022)]{kim2022differentiable}
Kim, M.
\newblock Differentiable expectation-maximization for set representation learning.
\newblock In \emph{International Conference on Learning Representations}, 2022.

\bibitem[Klambauer et~al.(2017)Klambauer, Unterthiner, Mayr, and Hochreiter]{Klambauer2017self}
Klambauer, G., Unterthiner, T., Mayr, A., and Hochreiter, S.
\newblock Self-normalizing neural networks.
\newblock In \emph{Advances in Neural Information Processing Systems}, volume~30. Curran Associates, Inc., 2017.

\bibitem[Kolouri et~al.(2017)Kolouri, Park, Thorpe, Slepcev, and Rohde]{Kolouri2017Optimal}
Kolouri, S., Park, S.~R., Thorpe, M., Slepcev, D., and Rohde, G.~K.
\newblock Optimal mass transport: Signal processing and machine-learning applications.
\newblock \emph{IEEE Signal Processing Magazine}, 34\penalty0 (4):\penalty0 43--59, 2017.
\newblock \doi{10.1109/MSP.2017.2695801}.

\bibitem[Koundouros \& Poulogiannis(2020)Koundouros and Poulogiannis]{koundouros2020reprogramming}
Koundouros, N. and Poulogiannis, G.
\newblock Reprogramming of fatty acid metabolism in cancer.
\newblock \emph{British journal of cancer}, 122\penalty0 (1):\penalty0 4--22, 2020.

\bibitem[Kvamme et~al.(2019)Kvamme, Borgan, and Scheel]{kvamme2019time}
Kvamme, H., Borgan, {\O}., and Scheel, I.
\newblock Time-to-event prediction with neural networks and cox regression.
\newblock \emph{arXiv preprint arXiv:1907.00825}, 2019.

\bibitem[Lee et~al.(2024)Lee, Lee, Ko, Kawaguchi, Lee, and Hwang]{lee2024selfsupervised}
Lee, D.~B., Lee, S., Ko, J., Kawaguchi, K., Lee, J., and Hwang, S.~J.
\newblock Self-supervised dataset distillation for transfer learning.
\newblock In \emph{The Twelfth International Conference on Learning Representations}, 2024.

\bibitem[Lee et~al.(2019)Lee, Lee, Kim, Kosiorek, Choi, and Teh]{lee2019set}
Lee, J., Lee, Y., Kim, J., Kosiorek, A., Choi, S., and Teh, Y.~W.
\newblock Set transformer: A framework for attention-based permutation-invariant neural networks.
\newblock In \emph{International conference on machine learning}, pp.\  3744--3753. PMLR, 2019.

\bibitem[Lee et~al.(2020)Lee, Ha, Zhang, and Kim]{Lee2020A}
Lee, S., Ha, J., Zhang, D., and Kim, G.
\newblock A neural dirichlet process mixture model for task-free continual learning.
\newblock In \emph{International Conference on Learning Representations}, 2020.

\bibitem[Li \& Dewey(2011)Li and Dewey]{li2011rsem}
Li, B. and Dewey, C.~N.
\newblock {RSEM}: accurate transcript quantification from rna-seq data with or without a reference genome.
\newblock \emph{BMC bioinformatics}, 12:\penalty0 1--16, 2011.

\bibitem[Li et~al.(2023)Li, Zhu, Zhang, Sun, Shui, Kuang, Zheng, and Yang]{li2023task}
Li, H., Zhu, C., Zhang, Y., Sun, Y., Shui, Z., Kuang, W., Zheng, S., and Yang, L.
\newblock Task-specific fine-tuning via variational information bottleneck for weakly-supervised pathology whole slide image classification.
\newblock In \emph{Proceedings of the IEEE/CVF Conference on Computer Vision and Pattern Recognition}, pp.\  7454--7463, 2023.

\bibitem[Li et~al.(2022)Li, Li, Jiang, and Xia]{li2022deep}
Li, N., Li, W., Jiang, Y., and Xia, S.-T.
\newblock Deep dirichlet process mixture models.
\newblock In \emph{Uncertainty in Artificial Intelligence}, pp.\  1138--1147. PMLR, 2022.

\bibitem[Liang et~al.(2023)Liang, Lyu, Fan, Tsaw, Liu, Mo, Yogatama, Morency, and Salakhutdinov]{liang2023highmodality}
Liang, P.~P., Lyu, Y., Fan, X., Tsaw, J., Liu, Y., Mo, S., Yogatama, D., Morency, L.-P., and Salakhutdinov, R.
\newblock High-modality multimodal transformer: Quantifying modality \& interaction heterogeneity for high-modality representation learning.
\newblock \emph{Transactions on Machine Learning Research}, 2023.
\newblock ISSN 2835-8856.

\bibitem[Liberzon et~al.(2015)Liberzon, Birger, Thorvaldsd{\'o}ttir, Ghandi, Mesirov, and Tamayo]{liberzon2015molecular}
Liberzon, A., Birger, C., Thorvaldsd{\'o}ttir, H., Ghandi, M., Mesirov, J.~P., and Tamayo, P.
\newblock The molecular signatures database hallmark gene set collection.
\newblock \emph{Cell systems}, 1\penalty0 (6):\penalty0 417--425, 2015.

\bibitem[Lipkova et~al.(2022)Lipkova, Chen, Chen, Lu, Barbieri, Shao, Vaidya, Chen, Zhuang, Williamson, et~al.]{lipkova2022artificial}
Lipkova, J., Chen, R.~J., Chen, B., Lu, M.~Y., Barbieri, M., Shao, D., Vaidya, A.~J., Chen, C., Zhuang, L., Williamson, D.~F., et~al.
\newblock Artificial intelligence for multimodal data integration in oncology.
\newblock \emph{Cancer cell}, 40\penalty0 (10):\penalty0 1095--1110, 2022.

\bibitem[Liu et~al.(2018)Liu, Lichtenberg, Hoadley, Poisson, Lazar, Cherniack, Kovatich, Benz, Levine, Lee, et~al.]{liu2018integrated}
Liu, J., Lichtenberg, T., Hoadley, K.~A., Poisson, L.~M., Lazar, A.~J., Cherniack, A.~D., Kovatich, A.~J., Benz, C.~C., Levine, D.~A., Lee, A.~V., et~al.
\newblock {An integrated TCGA pan-cancer clinical data resource to drive high-quality survival outcome analytics}.
\newblock \emph{Cell}, 173\penalty0 (2):\penalty0 400--416, 2018.

\bibitem[Loo et~al.(2021)Loo, Toh, Xie, Pathak, Tan, Ma, Lee, Shatishwaran, Yeo, Yuan, et~al.]{loo2021fatty}
Loo, S.~Y., Toh, L.~P., Xie, W.~H., Pathak, E., Tan, W., Ma, S., Lee, M.~Y., Shatishwaran, S., Yeo, J. Z.~Z., Yuan, J., et~al.
\newblock Fatty acid oxidation is a druggable gateway regulating cellular plasticity for driving metastasis in breast cancer.
\newblock \emph{Science Advances}, 7\penalty0 (41):\penalty0 eabh2443, 2021.

\bibitem[Lu et~al.(2021)Lu, Williamson, Chen, Chen, Barbieri, and Mahmood]{lu2021data}
Lu, M.~Y., Williamson, D.~F., Chen, T.~Y., Chen, R.~J., Barbieri, M., and Mahmood, F.
\newblock Data-efficient and weakly supervised computational pathology on whole-slide images.
\newblock \emph{Nature biomedical engineering}, 5\penalty0 (6):\penalty0 555--570, 2021.

\bibitem[Mialon et~al.(2021)Mialon, Chen, d'Aspremont, and Mairal]{mialon2021a}
Mialon, G., Chen, D., d'Aspremont, A., and Mairal, J.
\newblock A trainable optimal transport embedding for feature aggregation and its relationship to attention.
\newblock In \emph{International Conference on Learning Representations}, 2021.

\bibitem[Mobadersany et~al.(2018)Mobadersany, Yousefi, Amgad, Gutman, Barnholtz-Sloan, Vel{\'a}zquez~Vega, Brat, and Cooper]{mobadersany2018predicting}
Mobadersany, P., Yousefi, S., Amgad, M., Gutman, D.~A., Barnholtz-Sloan, J.~S., Vel{\'a}zquez~Vega, J.~E., Brat, D.~J., and Cooper, L.~A.
\newblock Predicting cancer outcomes from histology and genomics using convolutional networks.
\newblock \emph{Proceedings of the National Academy of Sciences}, 115\penalty0 (13):\penalty0 E2970--E2979, 2018.

\bibitem[Nelson et~al.(2014)Nelson, Chang, and McDonnell]{nelson2014cholesterol}
Nelson, E.~R., Chang, C.-y., and McDonnell, D.~P.
\newblock Cholesterol and breast cancer pathophysiology.
\newblock \emph{Trends in Endocrinology \& Metabolism}, 25\penalty0 (12):\penalty0 649--655, 2014.

\bibitem[Olea-Flores et~al.(2018)Olea-Flores, Ju{\'a}rez-Cruz, Mendoza-Catal{\'a}n, Padilla-Benavides, and Navarro-Tito]{olea2018signaling}
Olea-Flores, M., Ju{\'a}rez-Cruz, J.~C., Mendoza-Catal{\'a}n, M.~A., Padilla-Benavides, T., and Navarro-Tito, N.
\newblock Signaling pathways induced by leptin during epithelial--mesenchymal transition in breast cancer.
\newblock \emph{International journal of molecular sciences}, 19\penalty0 (11):\penalty0 3493, 2018.

\bibitem[Olea-Flores et~al.(2020)Olea-Flores, Ju{\'a}rez-Cruz, Zu{\~n}iga-Eulogio, Acosta, Garc{\'\i}a-Rodr{\'\i}guez, Zacapala-Gomez, Mendoza-Catal{\'a}n, Ortiz-Ortiz, Ortu{\~n}o-Pineda, and Navarro-Tito]{olea2020new}
Olea-Flores, M., Ju{\'a}rez-Cruz, J.~C., Zu{\~n}iga-Eulogio, M.~D., Acosta, E., Garc{\'\i}a-Rodr{\'\i}guez, E., Zacapala-Gomez, A.~E., Mendoza-Catal{\'a}n, M.~A., Ortiz-Ortiz, J., Ortu{\~n}o-Pineda, C., and Navarro-Tito, N.
\newblock New actors driving the epithelial--mesenchymal transition in cancer: The role of leptin.
\newblock \emph{Biomolecules}, 10\penalty0 (12):\penalty0 1676, 2020.

\bibitem[Oquab et~al.(2023)Oquab, Darcet, Moutakanni, Vo, Szafraniec, Khalidov, Fernandez, Haziza, Massa, El-Nouby, et~al.]{oquab2023dinov2}
Oquab, M., Darcet, T., Moutakanni, T., Vo, H., Szafraniec, M., Khalidov, V., Fernandez, P., Haziza, D., Massa, F., El-Nouby, A., et~al.
\newblock Dinov2: Learning robust visual features without supervision.
\newblock \emph{arXiv preprint arXiv:2304.07193}, 2023.

\bibitem[P{\"o}lsterl(2020)]{sksurv}
P{\"o}lsterl, S.
\newblock scikit-survival: A library for time-to-event analysis built on top of scikit-learn.
\newblock \emph{Journal of Machine Learning Research}, 21\penalty0 (212):\penalty0 1--6, 2020.

\bibitem[Pramanick et~al.(2022)Pramanick, Roy, and Patel]{pramanick2022multimodal}
Pramanick, S., Roy, A., and Patel, V.~M.
\newblock Multimodal learning using optimal transport for sarcasm and humor detection.
\newblock In \emph{Proceedings of the IEEE/CVF Winter Conference on Applications of Computer Vision}, pp.\  3930--3940, 2022.

\bibitem[Pramanick et~al.(2023)Pramanick, Jing, Nag, Zhu, Shah, LeCun, and Chellappa]{pramanick2023volta}
Pramanick, S., Jing, L., Nag, S., Zhu, J., Shah, H.~J., LeCun, Y., and Chellappa, R.
\newblock Vo{LTA}: Vision-language transformer with weakly-supervised local-feature alignment.
\newblock \emph{Transactions on Machine Learning Research}, 2023.
\newblock ISSN 2835-8856.

\bibitem[Quiros et~al.(2023)Quiros, Coudray, Yeaton, Yang, Liu, Le, Chiriboga, Karimkhan, Narula, Moore, Park, Pass, Moreira, Quesne, Tsirigos, and Yuan]{quiros2023mapping}
Quiros, A.~C., Coudray, N., Yeaton, A., Yang, X., Liu, B., Le, H., Chiriboga, L., Karimkhan, A., Narula, N., Moore, D.~A., Park, C.~Y., Pass, H., Moreira, A.~L., Quesne, J.~L., Tsirigos, A., and Yuan, K.
\newblock Mapping the landscape of histomorphological cancer phenotypes using self-supervised learning on unlabeled, unannotated pathology slides, 2023.

\bibitem[Rakha et~al.(2010)Rakha, Reis-Filho, Baehner, Dabbs, Decker, Eusebi, Fox, Ichihara, Jacquemier, Lakhani, et~al.]{rakha2010breast}
Rakha, E.~A., Reis-Filho, J.~S., Baehner, F., Dabbs, D.~J., Decker, T., Eusebi, V., Fox, S.~B., Ichihara, S., Jacquemier, J., Lakhani, S.~R., et~al.
\newblock Breast cancer prognostic classification in the molecular era: the role of histological grade.
\newblock \emph{Breast cancer research}, 12:\penalty0 1--12, 2010.

\bibitem[Reimand et~al.(2019)Reimand, Isserlin, Voisin, Kucera, Tannus-Lopes, Rostamianfar, Wadi, Meyer, Wong, Xu, et~al.]{reimand2019pathway}
Reimand, J., Isserlin, R., Voisin, V., Kucera, M., Tannus-Lopes, C., Rostamianfar, A., Wadi, L., Meyer, M., Wong, J., Xu, C., et~al.
\newblock {Pathway enrichment analysis and visualization of omics data using g: Profiler, GSEA, Cytoscape and EnrichmentMap}.
\newblock \emph{Nature protocols}, 14\penalty0 (2):\penalty0 482--517, 2019.

\bibitem[Re{\v{z}}en et~al.(2022)Re{\v{z}}en, Rozman, Kov{\'a}cs, Kov{\'a}cs, Sipos, Bai, and Mik{\'o}]{revzen2022role}
Re{\v{z}}en, T., Rozman, D., Kov{\'a}cs, T., Kov{\'a}cs, P., Sipos, A., Bai, P., and Mik{\'o}, E.
\newblock The role of bile acids in carcinogenesis.
\newblock \emph{Cellular and molecular life sciences}, 79\penalty0 (5):\penalty0 243, 2022.

\bibitem[Rosen et~al.(2023)Rosen, Roohani, Agrawal, Samotorcan, Consortium, Quake, and Leskovec]{rosen2023universal}
Rosen, Y., Roohani, Y., Agrawal, A., Samotorcan, L., Consortium, T.~S., Quake, S.~R., and Leskovec, J.
\newblock Universal cell embeddings: A foundation model for cell biology.
\newblock \emph{bioRxiv}, pp.\  2023--11, 2023.

\bibitem[Shao et~al.(2021)Shao, Bian, Chen, Wang, Zhang, Ji, et~al.]{shao2021transmil}
Shao, Z., Bian, H., Chen, Y., Wang, Y., Zhang, J., Ji, X., et~al.
\newblock Transmil: Transformer based correlated multiple instance learning for whole slide image classification.
\newblock \emph{Advances in Neural Information Processing Systems}, 34:\penalty0 2136--2147, 2021.

\bibitem[Snell et~al.(2017)Snell, Swersky, and Zemel]{snell2017prototypical}
Snell, J., Swersky, K., and Zemel, R.
\newblock Prototypical networks for few-shot learning.
\newblock \emph{Advances in neural information processing systems}, 30, 2017.

\bibitem[Song et~al.(2023)Song, Jaume, Williamson, Lu, Vaidya, Miller, and Mahmood]{song2023artificial}
Song, A.~H., Jaume, G., Williamson, D.~F., Lu, M.~Y., Vaidya, A., Miller, T.~R., and Mahmood, F.
\newblock Artificial intelligence for digital and computational pathology.
\newblock \emph{Nature Reviews Bioengineering}, 1\penalty0 (12):\penalty0 930--949, 2023.

\bibitem[Song et~al.(2024{\natexlab{a}})Song, Chen, Ding, Williamson, Jaume, and Mahmood]{song2024morphological}
Song, A.~H., Chen, R.~J., Ding, T., Williamson, D.~F., Jaume, G., and Mahmood, F.
\newblock Morphological prototyping for unsupervised slide representation learning in computational pathology.
\newblock In \emph{Proceedings of the IEEE/CVF Conference on Computer Vision and Pattern Recognition}, 2024{\natexlab{a}}.

\bibitem[Song et~al.(2024{\natexlab{b}})Song, Williams, Williamson, Chow, Jaume, Gao, Zhang, Chen, Baras, Serafin, et~al.]{song2024analysis}
Song, A.~H., Williams, M., Williamson, D.~F., Chow, S.~S., Jaume, G., Gao, G., Zhang, A., Chen, B., Baras, A.~S., Serafin, R., et~al.
\newblock {Analysis of 3D pathology samples using weakly supervised AI}.
\newblock \emph{Cell}, 187\penalty0 (10):\penalty0 2502--2520, 2024{\natexlab{b}}.

\bibitem[Steyaert et~al.(2023)Steyaert, Pizurica, Nagaraj, Khandelwal, Hernandez-Boussard, Gentles, and Gevaert]{steyaert2023multimodal}
Steyaert, S., Pizurica, M., Nagaraj, D., Khandelwal, P., Hernandez-Boussard, T., Gentles, A.~J., and Gevaert, O.
\newblock Multimodal data fusion for cancer biomarker discovery with deep learning.
\newblock \emph{Nature Machine Intelligence}, 5\penalty0 (4):\penalty0 351--362, 2023.

\bibitem[Subramanian et~al.(2005)Subramanian, Tamayo, Mootha, Mukherjee, Ebert, Gillette, Paulovich, Pomeroy, Golub, Lander, et~al.]{subramanian2005gene}
Subramanian, A., Tamayo, P., Mootha, V.~K., Mukherjee, S., Ebert, B.~L., Gillette, M.~A., Paulovich, A., Pomeroy, S.~L., Golub, T.~R., Lander, E.~S., et~al.
\newblock Gene set enrichment analysis: a knowledge-based approach for interpreting genome-wide expression profiles.
\newblock \emph{Proceedings of the National Academy of Sciences}, 102\penalty0 (43):\penalty0 15545--15550, 2005.

\bibitem[Tas et~al.(2013)Tas, Ciftci, Kilic, and Karabulut]{tas2013age}
Tas, F., Ciftci, R., Kilic, L., and Karabulut, S.
\newblock Age is a prognostic factor affecting survival in lung cancer patients.
\newblock \emph{Oncology letters}, 6\penalty0 (5):\penalty0 1507--1513, 2013.

\bibitem[Theodoris et~al.(2023)Theodoris, Xiao, Chopra, Chaffin, Al~Sayed, Hill, Mantineo, Brydon, Zeng, Liu, et~al.]{theodoris2023transfer}
Theodoris, C.~V., Xiao, L., Chopra, A., Chaffin, M.~D., Al~Sayed, Z.~R., Hill, M.~C., Mantineo, H., Brydon, E.~M., Zeng, Z., Liu, X.~S., et~al.
\newblock Transfer learning enables predictions in network biology.
\newblock \emph{Nature}, 618\penalty0 (7965):\penalty0 616--624, 2023.

\bibitem[Uno et~al.(2011)Uno, Cai, Pencina, D'Agostino, and Wei]{uno2011c}
Uno, H., Cai, T., Pencina, M.~J., D'Agostino, R.~B., and Wei, L.-J.
\newblock On the c-statistics for evaluating overall adequacy of risk prediction procedures with censored survival data.
\newblock \emph{Statistics in medicine}, 30\penalty0 (10):\penalty0 1105--1117, 2011.

\bibitem[Vaidya et~al.(2024)Vaidya, Chen, Williamson, Song, Jaume, Yang, Hartvigsen, Dyer, Lu, Lipkova, et~al.]{vaidya2024demographic}
Vaidya, A., Chen, R.~J., Williamson, D.~F., Song, A.~H., Jaume, G., Yang, Y., Hartvigsen, T., Dyer, E.~C., Lu, M.~Y., Lipkova, J., et~al.
\newblock Demographic bias in misdiagnosis by computational pathology models.
\newblock \emph{Nature Medicine}, 30\penalty0 (4):\penalty0 1174--1190, 2024.

\bibitem[Vaswani et~al.(2017)Vaswani, Shazeer, Parmar, Uszkoreit, Jones, Gomez, Kaiser, and Polosukhin]{vaswani2017attention}
Vaswani, A., Shazeer, N., Parmar, N., Uszkoreit, J., Jones, L., Gomez, A.~N., Kaiser, {\L}., and Polosukhin, I.
\newblock Attention is all you need.
\newblock \emph{Advances in neural information processing systems}, 30, 2017.

\bibitem[Volinsky-Fremond et~al.(2024)Volinsky-Fremond, Horeweg, Andani, Barkey~Wolf, Lafarge, de~Kroon, {\O}rtoft, H{\o}gdall, Dijkstra, Jobsen, et~al.]{volinsky2024prediction}
Volinsky-Fremond, S., Horeweg, N., Andani, S., Barkey~Wolf, J., Lafarge, M.~W., de~Kroon, C.~D., {\O}rtoft, G., H{\o}gdall, E., Dijkstra, J., Jobsen, J.~J., et~al.
\newblock Prediction of recurrence risk in endometrial cancer with multimodal deep learning.
\newblock \emph{Nature Medicine}, pp.\  1--12, 2024.

\bibitem[Vu et~al.(2023)Vu, Rajpoot, Raza, and Rajpoot]{VU2023handcrafted}
Vu, Q.~D., Rajpoot, K., Raza, S. E.~A., and Rajpoot, N.
\newblock {Handcrafted Histological Transformer (H2T): Unsupervised representation of whole slide images}.
\newblock \emph{Medical Image Analysis}, 85:\penalty0 102743, 2023.
\newblock ISSN 1361-8415.

\bibitem[Wang et~al.(2020)Wang, Tran, and Feiszli]{wang2020makes}
Wang, W., Tran, D., and Feiszli, M.
\newblock What makes training multi-modal classification networks hard?
\newblock In \emph{Proceedings of the IEEE/CVF conference on computer vision and pattern recognition}, pp.\  12695--12705, 2020.

\bibitem[Wang et~al.(2023)Wang, Bao, Dong, Bjorck, Peng, Liu, Aggarwal, Mohammed, Singhal, Som, et~al.]{wang2023image}
Wang, W., Bao, H., Dong, L., Bjorck, J., Peng, Z., Liu, Q., Aggarwal, K., Mohammed, O.~K., Singhal, S., Som, S., et~al.
\newblock {Image as a Foreign Language: BEiT Pretraining for Vision and Vision-Language Tasks}.
\newblock In \emph{Proceedings of the IEEE/CVF Conference on Computer Vision and Pattern Recognition}, pp.\  19175--19186, 2023.

\bibitem[Wang et~al.(2022)Wang, Yang, Zhang, Wang, Zhang, Yang, Huang, and Han]{wang2022transformer}
Wang, X., Yang, S., Zhang, J., Wang, M., Zhang, J., Yang, W., Huang, J., and Han, X.
\newblock Transformer-based unsupervised contrastive learning for histopathological image classification.
\newblock \emph{Medical image analysis}, 81:\penalty0 102559, 2022.

\bibitem[Wang et~al.(2012)Wang, Lehu{\'e}d{\'e}, Laurent, Dirat, Dauvillier, Bochet, Le~Gonidec, Escourrou, Valet, and Muller]{wang2012adipose}
Wang, Y.-Y., Lehu{\'e}d{\'e}, C., Laurent, V., Dirat, B., Dauvillier, S., Bochet, L., Le~Gonidec, S., Escourrou, G., Valet, P., and Muller, C.
\newblock Adipose tissue and breast epithelial cells: a dangerous dynamic duo in breast cancer.
\newblock \emph{Cancer letters}, 324\penalty0 (2):\penalty0 142--151, 2012.

\bibitem[Wang et~al.(2021)Wang, Li, Wang, and Li]{wang2021gpdbn}
Wang, Z., Li, R., Wang, M., and Li, A.
\newblock {GPDBN: deep bilinear network integrating both genomic data and pathological images for breast cancer prognosis prediction}.
\newblock \emph{Bioinformatics}, 37\penalty0 (18):\penalty0 2963--2970, 2021.

\bibitem[Wang et~al.(2024)Wang, Zhang, Xu, Imoto, Chen, and Song]{wang2024histogenomic}
Wang, Z., Zhang, Y., Xu, Y., Imoto, S., Chen, H., and Song, J.
\newblock Histo-genomic knowledge distillation for cancer prognosis from histopathology whole slide images, 2024.

\bibitem[Wong(1986)]{wong1986theory}
Wong, W.~H.
\newblock Theory of partial likelihood.
\newblock \emph{The Annals of statistics}, pp.\  88--123, 1986.

\bibitem[Wu \& Zhou(2010)Wu and Zhou]{wu2010tnf}
Wu, Y.-d. and Zhou, B.
\newblock {TNF-$\alpha$/NF-$\kappa$B/Snail pathway in cancer cell migration and invasion}.
\newblock \emph{British journal of cancer}, 102\penalty0 (4):\penalty0 639--644, 2010.

\bibitem[Wulczyn et~al.(2020)Wulczyn, Steiner, Xu, Sadhwani, Wang, Flament-Auvigne, Mermel, Chen, Liu, and Stumpe]{wulczyn2020deep}
Wulczyn, E., Steiner, D.~F., Xu, Z., Sadhwani, A., Wang, H., Flament-Auvigne, I., Mermel, C.~H., Chen, P.-H.~C., Liu, Y., and Stumpe, M.~C.
\newblock Deep learning-based survival prediction for multiple cancer types using histopathology images.
\newblock \emph{PloS ONE}, 15\penalty0 (6), 2020.

\bibitem[Xiang \& Zhang(2022)Xiang and Zhang]{xiang2022exploring}
Xiang, J. and Zhang, J.
\newblock Exploring low-rank property in multiple instance learning for whole slide image classification.
\newblock In \emph{The Eleventh International Conference on Learning Representations}, 2022.

\bibitem[Xiang \& Zhang(2023)Xiang and Zhang]{xiang2023exploring}
Xiang, J. and Zhang, J.
\newblock Exploring low-rank property in multiple instance learning for whole slide image classification.
\newblock In \emph{The Eleventh International Conference on Learning Representations}, 2023.

\bibitem[Xiong et~al.(2021)Xiong, Zeng, Chakraborty, Tan, Fung, Li, and Singh]{xiong2021nystromformer}
Xiong, Y., Zeng, Z., Chakraborty, R., Tan, M., Fung, G., Li, Y., and Singh, V.
\newblock Nystr{\"o}mformer: A nystr{\"o}m-based algorithm for approximating self-attention.
\newblock In \emph{Proceedings of the AAAI Conference on Artificial Intelligence}, volume~35, pp.\  14138--14148, 2021.

\bibitem[Xu et~al.(2023)Xu, Zhu, and Clifton]{xu2023multimodal}
Xu, P., Zhu, X., and Clifton, D.~A.
\newblock Multimodal learning with transformers: A survey.
\newblock \emph{IEEE Transactions on Pattern Analysis and Machine Intelligence}, 2023.

\bibitem[Xu \& Chen(2023)Xu and Chen]{Xu_2023_ICCV}
Xu, Y. and Chen, H.
\newblock Multimodal optimal transport-based co-attention transformer with global structure consistency for survival prediction.
\newblock In \emph{Proceedings of the IEEE/CVF International Conference on Computer Vision (ICCV)}, pp.\  21241--21251, October 2023.

\bibitem[Yao et~al.(2019)Yao, Zhu, and Huang]{yao2019deep}
Yao, J., Zhu, X., and Huang, J.
\newblock Deep multi-instance learning for survival prediction from whole slide images.
\newblock In \emph{International Conference on Medical Image Computing and Computer-Assisted Intervention}, pp.\  496--504. Springer, 2019.

\bibitem[Yao et~al.(2020)Yao, Zhu, Jonnagaddala, Hawkins, and Huang]{yao2020whole}
Yao, J., Zhu, X., Jonnagaddala, J., Hawkins, N., and Huang, J.
\newblock Whole slide images based cancer survival prediction using attention guided deep multiple instance learning networks.
\newblock \emph{Medical Image Analysis}, 65:\penalty0 101789, 2020.

\bibitem[Yu et~al.(2022)Yu, Yap, Cheng, Ngo, Vaneckova, Karikios, Canfell, and Weber]{yu2022evaluating}
Yu, X.~Q., Yap, M.~L., Cheng, E.~S., Ngo, P.~J., Vaneckova, P., Karikios, D., Canfell, K., and Weber, M.~F.
\newblock Evaluating prognostic factors for sex differences in lung cancer survival: findings from a large australian cohort.
\newblock \emph{Journal of Thoracic Oncology}, 17\penalty0 (5):\penalty0 688--699, 2022.

\bibitem[Zadeh \& Schmid(2020)Zadeh and Schmid]{zadeh2020bias}
Zadeh, S.~G. and Schmid, M.
\newblock Bias in cross-entropy-based training of deep survival networks.
\newblock \emph{IEEE transactions on pattern analysis and machine intelligence}, 43\penalty0 (9):\penalty0 3126--3137, 2020.

\bibitem[Zadeh \& Schmid(2021)Zadeh and Schmid]{zadeh2021bias}
Zadeh, S.~G. and Schmid, M.
\newblock Bias in cross-entropy-based training of deep survival networks.
\newblock \emph{IEEE Transactions on Pattern Analysis and Machine Intelligence}, 43\penalty0 (9):\penalty0 3126--3137, 2021.
\newblock \doi{10.1109/TPAMI.2020.2979450}.

\bibitem[Zhang et~al.(2024)Zhang, Xu, Chen, Xie, and Chen]{zhang2024prototypical}
Zhang, Y., Xu, Y., Chen, J., Xie, F., and Chen, H.
\newblock Prototypical information bottlenecking and disentangling for multimodal cancer survival prediction.
\newblock In \emph{The Twelfth International Conference on Learning Representations}, 2024.

\bibitem[Zhou \& Chen(2023)Zhou and Chen]{zhou2023cross}
Zhou, F. and Chen, H.
\newblock Cross-modal translation and alignment for survival analysis.
\newblock In \emph{Proceedings of the IEEE/CVF International Conference on Computer Vision}, pp.\  21485--21494, 2023.

\end{thebibliography}
\bibliographystyle{icml2024}

\newpage
\appendix
\onecolumn

\section{Prototype-based histopathology baselines}\label{sec:histo_agg}
In this section, we present the three histology prototype aggregation approaches that can be used by $\ours$, with particular emphasis on the Gaussian mixture model (GMM). The following prototype-based aggregation schemes can be embedded as a feed-forward module in our models.

\subsection{Hard clustering (HC)} For each $\z_{i,\histo}$, we identify the closest prototype $\proto_{c,\histo}$ evaluated with the $\mathcal{L}_2$ distance, \textit{i.e.,} $c_i=\arg\max_c\lVert \z_{i,\histo} - \proto_{c,\histo}\rVert_2$ to determine the cluster assignment. The post-aggregation embedding $\z_{c,\histo}^{\text{agg.}}$ is an average of all embeddings assigned to $c$,
\begin{equation}
    \z_{c,\histo}^{\text{agg.}}=\sum_{i=1}^{N_\histo} \mathbf{1}_{c_i=c}\cdot\z_{i,\histo}/\sum_{i=1}^{N_\histo} \mathbf{1}_{c_i=c}.
\end{equation}
where $\mathbf{1}$ is the indicator function.

\subsection{Optimal transport (OT)} We can formulate aggregation as that of transporting from the empirical distribution of $\hat{p}(\z_{\histo})=1/N_\histo\cdot\sum_{i=1}^{N_\histo}\delta(\z_{i,h})$ to $\hat{p}(\proto_{\histo})=1/C_{\histo}\cdot\sum_{i=1}^{C_\histo}\delta({\proto_{c,\histo}})$. The transport plan $\mathbf{T}\in\mathbb{R}_{+}^{N_\histo\times C_{\histo}}$ is given as the solution to the following entropic-regularized optimal transport problem~\cite{cuturi2013sinkhorn, Kolouri2017Optimal},
\begin{equation}
\begin{split}
    &\min_{\mathbf{T}}\sum_{i,c}\lVert\z_{i,\histo} - \proto_{c,\histo}\rVert_2 \cdot \mathbf{T}_{i,c} +\epsilon\cdot \mathbf{T}_{i,c}\log \mathbf{T}_{i,c},\quad\text{such that } \sum_{i=1}^{N_\histo}\mathbf{T}_{i,c}=1/C_{\histo} \,\,\text{and}\,\, \sum_{i=1}^{C_{\histo}}\mathbf{T}_{i,c}=1/N_\histo,\\
\end{split}
\end{equation}
where $\varepsilon$ is the regularization parameter. Based on the optimal transport plan $\widehat{\mathbf{T}}$ obtained by the widely-used Sinkhorn algorithm~\cite{cuturi2013sinkhorn}, the post-aggregation embedding is given as $\z_{c,\histo}^{ \text{agg.}}=\sum_{i=1}^{N_\histo}\widehat{\mathbf{T}}_{i,c}\cdot \z_{i,\histo}$.

\subsection{Gaussian Mixture Models}
With the Gaussian mixture model (GMM) as the generative model for each token embedding, we provide a detailed derivation for estimation of 1) the posterior probability for the prototype assignment $q(c|\z_{i,\histo};\theta)$ and 2) the GMM parameters $\theta=\{\pi_c, \boldsymbol{\mu}_c, \Sigma_c\}$. Given the GMM specification,
\begin{equation}\label{eq:gmm_supp}
\begin{split}
   p(\z_{i,\histo} ; \theta) &= \sum_{c=1}^{C_\histo} p(c_i=c; \theta)\cdot p(\z_{i,\histo}| c_i=c;\theta)\\
   &= \sum_{c=1}^{C_\histo}\pi_c \cdot \mathcal{N}(\z_{i,\histo}; \boldsymbol{\mu}_c, \Sigma_c),\,\, s.t.\sum_{c=1}^{C_\histo}\pi_c=1,\\ 
\end{split}
\end{equation}
the goal is to estimate $\theta$ that maximizes the log-likelihood $\max_{\theta} \sum_{i=1}^{N_\histo}\log p(\z_{i,\histo};\theta)=\max_{\theta}\sum_{n=1}^{N_\histo}\log p(\mathbf{z}_{i,\histo};\theta)$.
We now present a detailed walkthrough of the expectation-maximization (EM) algorithm~\cite{dempster1977maximum, kim2022differentiable, song2024morphological} and how these ultimately lead to $\z_{c,\histo}^{\text{agg.}}$.

Using Jensen's inequality, we can lower-bound the log-likelihood as follows,
\begin{equation}
    \begin{split}
        \sum_{i=1}^{N_\histo}\log p(\mathbf{z}_{i,\histo};\theta)&=\sum_{i=1}^{N_\histo}\log \sum_{c=1}^{C_\histo} p(\z_{i,\histo}, c_i=c;\theta)\\
        &=\sum_{i=1}^{N_\histo}\log \sum_{c=1}^{C_\histo} q(c_i=c|\z_{i,\histo};\theta_{\text{old}})\cdot\frac{p(\z_{i,\histo}, c_i=c;\theta)}{q(c_i=c|\z_{i,\histo};\theta_{\text{old}})}\\
        &\geq \sum_{i=1}^{N_\histo}\sum_{c=1}^{C_\histo} q(c_i=c|\z_{i,\histo};\theta_{\text{old}}) \log \frac{p(\z_{i,\histo}, c_i=c;\theta)}{q(c_i=c|\z_{i,\histo};\theta_{\text{old}})}\\
        &= \sum_{i=1}^{N_\histo}\underbrace{E_{q(c_i=c|\z_{i,\histo};\theta_{\text{old}})}\left[\log p(\z_{i,\histo}, c_i=c;\theta)\right]}_{Q(\theta;\theta_{\text{old}})}-\sum_{i=1}^{N_\histo}\underbrace{E_{q(c_i=c|\z_{i,\histo};\theta_{\text{old}})}\left[q(c_i=c|\z_{i,\histo};\theta_{\text{old}})\right]}_{-H(C;\theta_{\text{old}})}.\\
    \end{split}
\end{equation}

Instead of maximizing the log-likelihood directly, we can now maximize a surrogate function, which is the lower bound given by Jensen's inequality. It can be shown that increasing this lower bound with respect to $\theta$ leads to monotonically increasing the actual log-likelihood~\cite{dempster1977maximum}. The optimization procedure involves iterative alternating steps of the E-step and the M-step and is thus referred to as the Expectation-Maximization (EM) algorithm. 

The surrogate function consists of two terms, $Q(\theta;\theta_{\text{old}})$ and $H(C;\theta_{\text{old}})$, which are expectations with respect to the posterior probability of prototype assignment, $q(c_i=c|\z_{i,\histo};\theta_{\text{old}})$. In the E-step, we can use Bayes' rule to compute the posterior probability and, consequently the expectations,
\begin{equation}
    \begin{split}
        q(c_i=c|\z_{i,\histo};\theta_{\text{old}}) &=\frac{q(\z_{i,\histo}|c_i=c;\theta_{\text{old}})\cdot q(c_i=c;\theta_{\text{old}})}{q(\z_{i,\histo};\theta_{\text{old}})}\\
        &=\frac{q(\z_{i,\histo}|c_i=c;\theta_{\text{old}})\cdot q(c_i=c;\theta_{\text{old}})}{\sum_{c=1}^{C_\histo} q(\z_{i,\histo}|c_i=c;\theta_{\text{old}})\cdot q(c_i=c;\theta_{\text{old}})}\\
        &=\frac{\pi_c \cdot \mathcal{N}(\z_{i,\histo}; \boldsymbol{\mu}_c, \Sigma_c)}{\sum_{c=1}^{C_\histo} \pi_c \cdot \mathcal{N}(\z_{i,\histo}; \boldsymbol{\mu}_c, \Sigma_c)}.\\
    \end{split}
\end{equation}

In the M-step, we find $\theta_{\text{new}}$ that maximizes the surrogate function based on the posterior probability computed from the E-step. Since the term $H(C;\theta_{\text{old}})$ is not a function of $\theta$ and therefore a constant (it is a function of $\theta_{\text{old}}$), we only need to optimize the term $Q(\theta; \theta_{\text{old}})$ by taking the derivative with respect to $\theta$,
\begin{equation}
    \begin{split}
        \sum_{i=1}^{N_\histo}\frac{\partial Q(\theta;\theta_{\text{old}})}{\partial \pi_c}=0 &\Rightarrow \pi_c^{\text{new}} = \frac{\sum_{i=1}^{N_\histo} q(c_i=c|\z_{i,\histo};\theta_{\text{old}})}{N_\histo}\\
        \sum_{i=1}^{N_\histo}\frac{\partial Q(\theta;\theta_{\text{old}})}{\partial \boldsymbol{\mu}_c}=0&\Rightarrow \boldsymbol{\mu}_c^{\text{new}}=\frac{\sum_{i=1}^{N_\histo} q(c_i=c|\z_{i,\histo};\theta_{\text{old}})\cdot\z_{i,\histo}}{\sum_{i=1}^{N_\histo} q(c_i=c|\z_{i,\histo};\theta_{\text{old}})} \\
        \sum_{i=1}^{N_\histo}\frac{\partial Q(\theta;\theta_{\text{old}})}{\partial \Sigma_c}=0&\Rightarrow \Sigma_c^{\text{new}}=\frac{\sum_{i=1}^{N_\histo} q(c_i=c|\z_{i,\histo};\theta_{\text{old}})\cdot (\z_{i,\histo} - \boldsymbol{\mu}_c^{\text{new}})^2}{\sum_{i=1}^{N_\histo} q(c_i=c|\z_{i,\histo};\theta_{\text{old}})}.\\
    \end{split}
\end{equation}

The E-step and M-step alternate until convergence is reached. In our setting, we usually found one round of EM iteration sufficient. As for the initial parameters, we set $\pi_c^{(0)}=1/C_\histo$, $\boldsymbol{\mu}_c^{(0)}=\mathbf{a}_{c,\histo}$, and $\Sigma_c^{(0)}=\mathbf{I}$, which serves as a morphology-aware initialization for the algorithm. 
The initialization for $\{\mathbf{a}_{c,\histo}\}_{c=1}^{C_\histo}$ is performed with K-means clustering on the training set of patches. This is constructed by aggregating token embeddings from all training slides in a disease cohort. 

Once $\widehat{\theta}$ is estimated, the post-aggregation embedding $\z_{c,\histo}^{\text{agg.}}\in\mathbb{R}^{d_\histo}$ with $d_\histo=1+2D$, can be represented as a concatenation $\z_{c,\histo}^{\text{agg.}}=[\widehat{\pi}_c, \widehat{\boldsymbol{\mu}}_c,\widehat{\Sigma}_c]$. Denoting $q_i=q(c_i=c|\z_{i,\histo};\theta_{\text{old}})$, we can express $\z_{c,\histo}^{\text{agg.}}$ as in Eq.~\ref{eq:histo_agg}, 
\begin{equation}
    \z_{c,\histo}^{\text{agg.}}=\sum_{i=1}^{N_\histo}\left[q_i/N_\histo,\,\,\, q_i\z_{i,\histo}/(\sum_{i=1}^{N_\histo}q_i),\,\,\, q_i\left(z_{i,\histo}-\sum_{i=1}^{N_\histo}q_i\z_{i,\histo}/(\sum_{i=1}^{N_\histo}q_i)\right)^2/(\sum_{i=1}^{N_\histo}q_i)\right],
\end{equation}
which can indeed expressed as a sum of the mapping function $g$ (albeit non-trivial to write out the full expression due to the iterative nature of EM) over $N_\histo$ elements.

\section{Proof for similarity between OT-based cross-alignment and Transformer-based cross-attention}\label{sec:proof}
\begin{lemma}
    Let $\Z_\gene\in\mathbb{R}^{C_\gene\times d}$ and $\Z_\histo\in\mathbb{R}^{C_\histo\times d}$ be the matrix representation of the token sets $\{\z_{i,\gene} \}_{i=1}^{C_\gene}$ and $\{\z_{k,\histo} \}_{k=1}^{C_\histo}$. Let $\Z_{\gene}\W_Q^{\text{T}}\in\mathbb{R}^{C_\gene\times d}$ and $\Z_{\histo}\W^{\text{T}}\in\mathbb{R}^{C_\histo\times d}$ be the linear projections of both sets. Let $\widehat{\mathbf{T}}\in\mathbb{R}^{C_\gene\times C_\histo}_{+}$ be the optimal transport plan, i.e., the solution to the entropic-regularized, unbalanced optimal transport problem between the two projected sets. Then, $\widehat{\mathbf{T}}$ is equivalent to the Transformer cross-attention matrix, $\sigma(\Z_\gene\W_Q^{\text{T}}\W\Z_\histo^{\text{T}}/\sqrt{d})$, up to a multiplicative factor where $\sigma(\cdot)$ denotes row-wise softmax, $\{\W_Q\z_{i,\gene}\}_{i=1}^{C_\gene}$ are queries, and $\{\W\z_{k,\histo}\}_{k=1}^{C_\histo}$ are keys.
\end{lemma}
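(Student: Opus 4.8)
The plan is to solve the entropic-regularized unbalanced OT problem in closed form through its stationarity (KKT) conditions and to recognize the minimizer as a scaled row-wise softmax. The crucial structural observation is what ``unbalanced'' buys us here: we retain only the row-marginal constraint $\sum_{c'}\mathbf{T}_{c,c'}=1/C_\gene$ and relax the column (histology) marginal. This one-sided relaxation is exactly what collapses the usual two-sided Sinkhorn scaling into a single normalization, so that no $v$-scaling survives and a pure softmax emerges.

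First I would fix the pairwise cost to the negative scaled dot product of the projected tokens, $\mathbf{D}_{c,c'} = -\tfrac{1}{\sqrt{d}}(\Z_\gene\W_Q^{\text{T}})_c\cdot(\Z_\histo\W^{\text{T}})_{c'}$, so that in matrix form $\mathbf{D} = -\tfrac{1}{\sqrt{d}}\Z_\gene\W_Q^{\text{T}}\W\Z_\histo^{\text{T}}$, which is precisely the negative logit matrix appearing inside the attention softmax. I would then form the Lagrangian with only the row constraint active (nonnegativity is automatic since the optimizer is strictly positive),
\begin{equation*}
\mathcal{L}(\mathbf{T},\boldsymbol{\alpha}) = \sum_{c,c'}\mathbf{D}_{c,c'}\mathbf{T}_{c,c'} + \varepsilon\sum_{c,c'}\mathbf{T}_{c,c'}\log\mathbf{T}_{c,c'} - \sum_{c}\alpha_c\Big(\sum_{c'}\mathbf{T}_{c,c'} - 1/C_\gene\Big).
\end{equation*}

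Next I would impose stationarity $\partial\mathcal{L}/\partial\mathbf{T}_{c,c'}=0$, which gives $\mathbf{D}_{c,c'}+\varepsilon(\log\mathbf{T}_{c,c'}+1)-\alpha_c=0$ and hence $\mathbf{T}_{c,c'}=\exp\big((\alpha_c-\varepsilon-\mathbf{D}_{c,c'})/\varepsilon\big)$. Substituting into the row constraint eliminates the multiplier $\alpha_c$ and yields
\begin{equation*}
\widehat{\mathbf{T}}_{c,c'} = \frac{1}{C_\gene}\,\frac{\exp(-\mathbf{D}_{c,c'}/\varepsilon)}{\sum_{c''}\exp(-\mathbf{D}_{c,c''}/\varepsilon)},
\end{equation*}
i.e.\ the fixed row mass $1/C_\gene$ times the row-wise softmax of $-\mathbf{D}/\varepsilon$. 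Choosing the temperature $\varepsilon=1$ (with the $1/\sqrt{d}$ already folded into $\mathbf{D}$), or equivalently $\varepsilon=\sqrt{d}$ without the prefactor, identifies $\widehat{\mathbf{T}} = \tfrac{1}{C_\gene}\,\sigma\big(\Z_\gene\W_Q^{\text{T}}\W\Z_\histo^{\text{T}}/\sqrt{d}\big)$, the cross-attention matrix up to the multiplicative factor $1/C_\gene$.

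The main obstacle is conceptual rather than computational: making precise the sense in which the transport problem is ``unbalanced'' and justifying that relaxing the column marginal is exactly what produces a clean one-sided softmax. I would argue that when both marginals are constrained the minimizer takes the Sinkhorn form $\mathbf{T}_{c,c'}=u_c\exp(-\mathbf{D}_{c,c'}/\varepsilon)v_{c'}$, requiring both scalings $u,v$, and is therefore in general not row-stochastic after a single normalization; dropping the column constraint forces $v_{c'}$ constant and recovers the softmax exactly. A secondary point I would state carefully is that the equivalence holds only up to the uniform factor $1/C_\gene$ and under the negative-dot-product cost, so the identification with attention is an exact structural match of the normalization pattern rather than of the absolute transported mass.
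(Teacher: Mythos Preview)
Your proposal is correct and follows essentially the same route as the paper: set the cost to the negative dot product of the projected tokens, keep only the row-marginal constraint (the ``unbalanced'' relaxation), write the Lagrangian, solve the stationarity condition to get $\mathbf{T}_{c,c'}\propto\exp(-\mathbf{D}_{c,c'}/\varepsilon)$, and normalize via the row constraint to obtain $\tfrac{1}{C_\gene}$ times the row-wise softmax with $\varepsilon=\sqrt{d}$. The only cosmetic difference is that you fold $1/\sqrt{d}$ into the cost and take $\varepsilon=1$, whereas the paper keeps the raw dot product and sets $\varepsilon=\sqrt{d}$; you already note the equivalence, and your extra paragraph explaining why dropping the column marginal kills the $v$-scaling is a nice clarification the paper leaves implicit.
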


\begin{proof}
This proof is an extension and adaption of a lemma from \cite{kim2022differentiable} to our application. 
We use the negative dot-product similarity as the cost between two sets of linearly-projected tokens $\{\W_Q\z_{i,\gene}\}_{i=1}^{C_\gene}$ and $\{\W\z_{k,\histo}\}_{k=1}^{C_\histo}$ as $\mathbf{D}_{i,k} = -\z_{i,\gene}^{\text{T}}\W_Q^{\text{T}}\W\z_{k,\histo}$. We can formulate the entropic-regularized optimal transport problem for optimizing the transport plan $\mathbf{T}\in\mathbb{R}_{+}^{C_\gene\times C_\histo}$,
\begin{equation}\label{eq:OT_app}
    \min_{\mathbf{T}}\sum_{i,k}\mathbf{D}_{i,k}\cdot \mathbf{T}_{i,k} + \varepsilon \mathbf{T}_{i,k}\log \mathbf{T}_{i,k},\quad\text{s.t.} \sum_{k=1}^{C_\histo} \mathbf{T}_{i,k} = \frac{1}{C_\gene},\forall i,
\end{equation}
without the constraint $\sum_{i=1}^{C_\gene} \mathbf{T}_{i,k}=1/C_\histo$. Note that this can be considered as an unbalanced OT problem~\cite{benamou2003numerical, chizat2018scaling}, as Eq.~\ref{eq:OT_app} can be written as
\begin{equation}
    \min_{\mathbf{T}}\sum_{i,k}\left(\mathbf{D}_{i,k}\cdot \mathbf{T}_{i,k} + \varepsilon \mathbf{T}_{i,k}\log \mathbf{T}_{i,k}\right) + \lambda_1\cdot \operatorname{Div.}( \mathbf{T}^{\text{T}}\cdot\mathbf{1}_{C_\gene}, 1/C_\histo\cdot \mathbf{1}_{C_\histo}) + \lambda_2\cdot \operatorname{Div.}( \mathbf{T}\cdot\mathbf{1}_{C_\histo}, 1/C_\gene\cdot \mathbf{1}_{C_\gene}),
\end{equation}
with $\lambda_1\rightarrow 0$ and $\lambda_2 \rightarrow \infty$, where $\operatorname{Div.}$ is some divergence measure and $\mathbf{1}_{C_\gene}$ is a $C_\gene$-length vector of ones. We now take Eq.~\ref{eq:OT_app} and solve it by using Lagrange multiplier,
\begin{equation}
    \mathcal{L} = \sum_{i,k}\left(\mathbf{D}_{i,k}\cdot \mathbf{T}_{i,k} + \varepsilon \mathbf{T}_{i,k}\log \mathbf{T}_{i,k}\right) +\sum_{i=1}^{C_\gene}\beta_i\left(\sum_{k=1}^{C_\histo} \mathbf{T}_{i,k} -\frac{1}{C_\gene}\right).
\end{equation}
We proceed by taking the derivative of $\mathcal{L}$ with respect to $\mathbf{T}_{i,k}$ and setting it to 0, 
\begin{equation}
    \frac{\partial \mathcal{L}}{\partial \mathbf{T}_{i,k}}=\mathbf{D}_{i,k} + \varepsilon(\log \mathbf{T}_{i,k} + 1)+\beta_i=0 \Rightarrow \mathbf{T}_{i,k}=\exp\left( -\mathbf{D}_{i,k}/\varepsilon +\gamma_i\right),
\end{equation}
where $\gamma_i=-(\beta_i/\varepsilon+1)$ is some constant. To solve for $\gamma_i$, we can use the constraint $\sum_{k=1}^{C_\histo} \mathbf{T}_{i,k}=\frac{1}{C_\gene}$,
\begin{equation}
    \exp(\gamma_i)\sum_{k=1}^{C_\histo}\exp(-\mathbf{D}_{i,k}/\varepsilon)=\frac{1}{C_{\gene}}\Rightarrow \exp(\gamma_i)=\frac{1}{C_\gene\cdot \sum_{k=1}^{C_\histo}\exp(-\mathbf{D}_{i,k}/\varepsilon)},
\end{equation}
and obtain $\widehat{\mathbf{T}}_{i,k}$ (by also setting $\varepsilon=\sqrt{d}$),
\begin{equation}
    \widehat{\mathbf{T}}_{i,k}=\frac{\exp(-\mathbf{D}_{i,k}/\sqrt{d})}{C_\gene\cdot \sum_{k=1}^{C_\histo}\exp(-\mathbf{D}_{i,k}/\sqrt{d})}=\frac{\exp(\z_{i,\gene}^{\text{T}}\W_Q^{\text{T}}\W\z_{k,\histo}/\sqrt{d})}{C_\gene\cdot \sum_{k=1}^{C_\histo}\exp(\z_{i,\gene}^{\text{T}}\W_Q^{\text{T}}\W\z_{k,\histo}/\sqrt{d})},
\end{equation}
with the softmax term appearing as an entry for $\widehat{\mathbf{T}}$. This is the same as the Transformer-based cross-attention operation up to a multiplicative factor of $1/C_\gene$.
\end{proof}

\section{Survival loss functions}\label{section:loss_supp}

Survival analysis models the time to an event, where the event outcome is not always observed (\textit{i.e.,} censored). In cancer survival outcome prediction, a censored event refers to patient survival or last known follow-up time, whereas an uncensored event is a patient death. Let $T$ be a continuous random variable representing patient survival time, and the survival function $S(t) = P(T \geq t_0)$ be the probability of a patient surviving longer than time $t_0$. The goal of survival analysis is to estimate the hazard function, $\lambda(t)$, which denotes the probability of an event occurring instantaneously at time $t>t_0$~\cite{cox1972regression}. We now detail the Cox proportional Hazards and Negative log-likelihood survival losses.


\subsection{Cox proportional hazards loss}
Cox proportional hazards model parameterizes the hazard function as an exponential linear function $\lambda(t|x)=\lambda_0(t)\exp^{\theta x}$. $\lambda_0$ is the baseline hazard function describing how the risk of an event changes over time. $\theta$ are the model parameters describing how the hazards vary with the features of a patient, $\bar{\mathbf{x}}_{\text{patient}} \in \mathbb{R}^{2d}$. To express the likelihood of an event to be observed at time $t$ with model parameters $\theta$, the Cox partial log-likelihood can be used ~\cite{wong1986theory}:

\begin{equation}\label{eq:cox_model}
l(\theta, \bar{\mathbf{x}}_{\text{patient}}) = -\sum_{i\in U}\biggl(\bar{\mathbf{x}}_{\text{patient}, i}\theta - \log(\sum_{j \in R_{i}} \exp({\bar{\mathbf{x}}_{\text{patient}, j}\theta}))\biggr)
\end{equation}

\begin{equation}\label{eq:cox_loss}
\frac{\partial l(\theta, \bar{\mathbf{x}}_{\text{patient}})}{\partial \bar{\mathbf{x}}_{\text{patient}, i}}=\delta(i)\theta-\sum_{i, j \in C_j, U}\frac{\theta\exp(\bar{\mathbf{x}}_{\text{patient}, i}\theta)}{\sum_{k\in C_j}\exp(\bar{\mathbf{x}}_{\text{patient}, k}\theta)}
\end{equation}

where $U$ is the set of uncensored patients, $C$ is the set of censored patients, $R_i$ is the set of patients whose last time of follow-up or time of death is after $i$, and $\delta(i)$ signifies if event outcome is observed or if censored. 

\subsection{Negative log-likelihood loss}
The Negative log-likelihood (NLL) survival loss~\cite{zadeh2021bias} generalizes the NLL to censored data. The aim is to predict the survival of a patient from the learned patient level embedding $\bar{\mathbf{x}}_{\text{patient}} \in \mathbb{R}^{2d}$. In accordance with previous work~\cite{zadeh2021bias}, the patient's survival state is defined by: (1) censorship status $c$, where $c=0$ represents an observed patient death due to disease and $c=1$ corresponds to the patient's last known follow-up, and (2) a time-to-event $t_i$, which corresponds to the time between the patient's diagnosis and observed death if $c=0$, or the last follow-up if $c=1$. Instead of predicting the observed time-to-event $t_i$, we \textit{discretize} it by defining non-overlapping time intervals $(t_{j-1}, t_j), \;j \in [1, ..., n]$ based on the quartiles of survival time values, and denote as $y_j$. The setup simplifies to a classification problem with censorship information, where each patient is now defined by $(\bar{\mathbf{x}}_{\text{slide}}, y_j, c)$. Next, we build a classifier such that each output logit $\hat{y}_j$ corresponds to a time interval. Then, we define the discrete hazard function $f_{\text{hazard}}(y_j | \bar{\mathbf{x}}_{\text{patient}}) = S(\hat{y}_j)$ where $S$ is the sigmoid activation. Intuitively, $f_{\text{hazard}}(y_j | \bar{\mathbf{x}}_{\text{patient}})$ represents the probability that the patient dies during time interval $(t_{j-1}, t_j)$. Additionally, we define the discrete survival function $f_{\text{surv}}(y_j | \bar{\mathbf{x}}_{\text{patient}}) = \prod_{k=1}^j \big(1 - f_{\text{hazard}}(y_k | \bar{\mathbf{x}}_{\text{patient}})\big)$ that represents the probability that the patient survives up to time interval $(t_{j-1}, t_j)$. Now, the NLL survival loss can be formally defined as:
\begin{align}\label{eq:rank_loss}
    \mathcal{L}\Big(&\{\bar{\mathbf{x}}^{(i)}_{\text{patient}}, y^{(i)}_j, c^{(i)} \}_{i=1}^{N_{D}} \Big) =\\
    &\sum_{i=1}^{N_{D}}
    -c^{(i)} \log(f_{\text{surv}}(y_j^{(i)} | \bar{\mathbf{x}}^{(i)}_{\text{patient}})) \label{eq:nnl_1} \\ 
    &+ (1-c^{(i)}) \log(f_{\text{surv}}(y_j^{(i)} -1 | \bar{\mathbf{x}}^{(i)}_{\text{patient}})) \label{eq:nnl_2} \\ 
    &+ (1-c^{(i)}) \log(f_{\text{hazard}}(y_j^{(i)} | \bar{\mathbf{x}}^{(i)}_{\text{patient}})) \label{eq:nnl_3}
\end{align}
where $N_{D}$ is the number of samples in the dataset. Eq.~\ref{eq:nnl_1} enforces high survival probability for patients alive after the final follow-up, Eq.~\ref{eq:nnl_2} enforces high survival up to the time stamp where death was observed for patients that died, and Eq.~\ref{eq:nnl_3} ensures correct timestamp is predicted for patients with observed death. As NLL does not require a set of patients for training, unlike Cox loss, it has been the de-facto loss function for cancer survival prediction with histology data, with the large number of tokens rendering the formation of patient batch infeasible.

\subsection{Concordance Index}
The Concordance Index (C-Index)~\cite{harrell1982evaluating} is a popular metric to measure the performance of survival prediction model ~\cite{chen2022pan, jaume2024modeling} and measures the rank correlation between the predicted risk scores and observed time points $t$. In prognosis prediction, the C-Index can be conceptually understood as a metric that assesses the accuracy of a model in predicting a higher risk of adverse outcomes for patients with shorter survival times. Formally, C-Index is defined as the ratio of concordant pairs to total comparable pairs. Two patients $i$ and $j$ are comparable if the patient with the lower observed time experienced an event (\textit{i.e.,} if $t_i > t_j$ then $\delta_j=1$, where $\delta$ is a binary indicator of whether event is observed or if last follow up time is known). A comparable pair $(i,j)$ is considered concordant if the risk predicted by a survival model $\hat{f}_{\text{risk}}$ is larger for the patient with the smaller event time, \textit{i.e.,} $\hat{f}_{\text{risk}, j} > \hat{f}_{\text{risk}, i}$ given $t_j < t_i$. Otherwise, the pair is considered discordant ~\cite{sksurv}. While C-Index allows for easy comparisons between models, known limitations exist, such as it is overly optimistic for increasing censorship in datasets~\cite{uno2011c}. 

\section{Datasets}\label{sec:dataset}
\subsection{TCGA cohort}
We evaluate all baselines on 6 cancer cohorts from TCGA: Bladder Urothelial Carcinoma (BLCA), Breast Invasive Carcinoma (BRCA), Lung adenocarcinoma (LUAD), Stomach adenocarcinoma (STAD), Colon and Rectum adenocarcinoma (CRC), Kidney renal clear cell carcinoma (KIRC), and low-grade gliomas (LGG). \textbf{Table~\ref{tab:tcga}} contains representative statistics of the dataset. A WSI is tessellated into nonoverlapping patches (tokens) of $256 \times 256$ pixels at $20\times$ magnification ($0.5\mu m$/pixel).

\begin{table}[!ht]
\centering
\small
\caption{\textbf{TCGA cohort statistics} The number of patients, total WSIs, and the average number of patches (tokens) in a WSI. A single patient can have multiple WSIs.}
\begin{tabular}{l|ccc}
\toprule
& Num. of patients & Num. of slides & Avg. set size \\
\midrule
BLCA & 359 & 423 & 16,312 \\
BRCA & 868 & 928 & 11,565 \\
LUAD & 412 & 463 & 4,714 \\
STAD & 318 & 318 & 10,955\\
CRC & 296 & 300 & 9,127\\
KIRC & 340 & 346 & 12,802 \\
\bottomrule
\end{tabular}
\label{tab:tcga}
\end{table}

\subsection{RNA-seq expression data}
Bulk RNA-seq expression for all TCGA cohorts---accessed from UCSC Xena database \cite{goldman2020visualizing}---is measured by Illumina HiSeq 2000 RNA Sequencing platform and then $\log_2(x+1)$ transformed RSEM normalized \cite{li2011rsem}. The $C_\gene=50$ Hallmark gene sets from Molecular Signatures Database (MSigDB) \cite{subramanian2005gene,liberzon2015molecular} are used to select and organize genes into biological pathways. Hallmark gene sets represent well-defined biological states in cancer. After organizing genes into Hallmark gene sets, we had 4,241 unique genes across the 50 gene sets. The average length of the gene sets is 142, with the minimum and maximum of 31 and 199. 


\section{Baselines}\label{section:baselines}
\subsection{Unimodal baselines}
In this section, we explain the unimodal MIL baselines that we compare our proposed framework with. 

\begin{enumerate}
    \item \textbf{ABMIL} \cite{ilse2018attention}: Attention-based multiple instance learning (ABMIL) first assigns patch-level importance scores through a local attention mechanism, where the score for one patch only depends on the contents of that patch. The attention-weighted sum of patches is used as the slide-level representation. The independence assumption of ABMIL neglects correlations between different patches. 
    
    \item \textbf{TransMIL} \cite{shao2021transmil}: Since ABMIL is unable to learn patch-level correlations, Transformer-based multiple instance learning (TransMIL) has been proposed. TransMIL first squares the sequence of low dimensional representations, then applies a Pyramidal Positional Encoding module to encode spatial knowledge, and finally uses Nystrom attention \cite{xiong2021nystromformer} to approximate self-attention scores between patches. The CLS token is taken as the slide-level representation.
    
    \item \textbf{Low-rank MIL} \cite{xiang2022exploring}: While TransMIL tries to learn slide-level representations by encoding patch correlations, it does not leverage the redundancy in WSI, which \cite{xiang2022exploring} used to propose iterative low-rank attention (ILRA). Each ILRA block consists of two layers: one aims to project the sequence of patch representations to a low-rank space by cross-attending it with a latent matrix, and the second reconstructs the input. Max-pooling over the output of $k$ such layers yields a low-rank slide-level representation.  

    \item \textbf{AttnMISL} \cite{yao2020whole}: In contrast with ABMIL, TransMIL, and ILRA, which learn slide-level representations using patch representations, AttnMISL first clusters patches into morphological prototypes using K-means clustering. Next, each prototype is encoded using prototype-specific fully convolutional Siamese networks \cite{yao2019deep}. The slide-level representation is then created using local attention pooling over the prototypes.  
    
    \item \textbf{Information Bottleneck MIL} \cite{li2023task}: Information bottlenecks (IB) are used to compress a WSI by removing irrelevant instances. IB aims to find patch instances that minimize the mutual information between the distribution of patches and patch representations. By only keeping such instances, \cite{li2023task} argue that most informative patches are retained, which can then be aggregated into a compact representation of WSI. 

    \item \textbf{Unimodal $\ours$}: This is similar to $\textsc{Panther}$~\cite{song2024morphological} in that GMM is used to map each histology patch embeddings into a pre-defined set of morphological prototypes. However, whereas $\textsc{Panther}$ concatenates the post-aggregation embeddings to form the slide representation, the unimodal $\ours$ employs $f_{\histo}^{\text{pre}}$, $f_{c,\gene}^{\text{pre}}$, and $f_c^{\text{post}}$ that are learned along with the downstream tasks.

\end{enumerate}

\subsection{Multimodal baselines}
We compare our proposed method $\ours$ with several early-fusion multimodal survival baselines.

\begin{enumerate}
    \item \textbf{MCAT} \cite{chen2021multimodal}: Multimodal Co-Attention Transformer (MCAT) is an early fusion technique that learns a dense co-attention mapping between histology and omic tokens. This mapping is then used to calculate omic-guided histology features, which are concatenated with omics to predict patient survival. MCAT uses omic prototypes because it groups genes into 6 functional families.  
    
    \item \textbf{MOTCat} \cite{Xu_2023_ICCV}: Multimodal Optimal Transport-based Co-attention Transformer (MOTCat) uses Optimal Transport to learn an optimal plan between histology tokens and genes grouped into 6 functional groups, similar to MCAT. The estimated optimal transport plan is then used for selecting the most informative histology tokens.  

    \item \textbf{SurvPath} \cite{jaume2024modeling}: Unlike MCAT and MOTCat, which are limited to six gene families, SurvPath introduces a transcriptomics tokenizer to encode genes into biological pathways that represent known cellular functions. The pathway tokens are then fused with histology patches via a memory-efficient transformer, which learns interactions between pathways and those between pathways and histology, but does not learn histology-to-histology interactions. 

    \item \textbf{CMTA} \cite{zhou2023cross}: Cross-Modal Translation and Alignment (CMTA) framework uses two parallel Transformer encoder-decoder modules. Encoders are used to extract intra-modal representations for each modality. Decoders generate cross-modal representations. A cross-modal attention module between the two encoders facilitates learning the cross-modal relations.
\end{enumerate}

\section{Clinical baselines}\label{sec:clinical}
We assess how $\ours$ and other survival prediction frameworks perform against basic clinical information included in patient metadata. Based on age, sex, and grade, empirically shown as crucial prognostic factors~\cite{bonnier1995age, rakha2010breast, tas2013age, yu2022evaluating}, we perform univariate/multivariate linear Cox regression to obtain the baseline. We observe that $\ours$ outperforms the baseline overall, hinting at its clinical potential for patient prognosis.

\begin{table*}[!ht]
\centering
\small
\caption{\textbf{Survival prediction with clinical variables} The clinical variables for the TCGA cohort were downloaded from cBioPortal. All denotes the combination of age, sex, and grade.}
\begin{tabular}{l|c|c|c|c|c|c|c}
\toprule
\textbf{Dataset} & BRCA & BLCA & LUAD & STAD & CRC & KIRC & Avg. ($\uparrow$)\\
\midrule
Age & 0.496{\tiny $\pm0.086$} & 0.578{\tiny $\pm0.056$} & 0.533{\tiny $\pm0.063$} & 0.449{\tiny $\pm0.055$} & 0.357{\tiny $\pm0.161$} & 0.554{\tiny $\pm0.147$} & 0.495  \\
Sex & 0.490{\tiny $\pm0.011$} & 0.489{\tiny $\pm0.028$} & 0.480{\tiny $\pm0.049$} & 0.529{\tiny $\pm0.069$} & 0.542{\tiny $\pm0.070$} & 0.437{\tiny $\pm0.057$} & 0.495  \\
Grade & 0.597{\tiny $\pm0.078$} & 0.515{\tiny $\pm0.018$} & N/A & 0.552{\tiny $\pm0.055$} & N/A & 0.594{\tiny $\pm0.083$} & N/A  \\
All & 0.563{\tiny $\pm0.055$} & 0.570{\tiny $\pm0.033$} & 0.528{\tiny $\pm0.028$} & 0.592{\tiny $\pm0.044$} & 0.655{\tiny $\pm0.119$} & 0.602{\tiny $\pm0.066$} & 0.585  \\
\textbf{$\ours_{\text{Trans.}}$} & 0.738{\tiny $\pm0.069$} & 0.635{\tiny $\pm0.051$} & 0.642{\tiny $\pm0.037$} & 0.598{\tiny $\pm0.051$} & 0.630{\tiny $\pm0.125$} & 0.747{\tiny $\pm0.106$}  & 0.665\\
\bottomrule
\end{tabular}
\label{tab:clinical_ablation}
\end{table*}

\section{Histology ablations}\label{sec:histology_ablations}

We perform additional experiments in four cancer types, varying: (1) the number of histology prototypes ($C_h=8,16,32$) and (2) the pretrained encoder (ResNet50, CTransPath, and UNI). The results are shown in \textbf{Tables}~\ref{tab:numproto_ablation}, \ref{tab:encoder_ablation}.

We observe that performance with UNI features is relatively consistent across $C_h$, with $C_h=32$ being the weakest. The choice of $C_h=16$ was influenced by two factors: (1) This gave the best overall performance in multimodal evaluation. (2) $C_h=8$ sometimes fails to distinguish between two similar but subtly different morphological exemplars (by grouping them into a single cluster), whereas $C_h=32$ induces harder morphological interpretation due to an excessive number of exemplars. $C_h=16$ offered the best trade-off.

\begin{table*}[!ht]
\centering
\small
\caption{\textbf{Ablation on the number of histology prototypes} Unimodal MMP was trained on varying number of histology prototypes $C_{\histo}$ for select cancer types.}
\begin{tabular}{l|c|c|c|c|c}
\toprule
\textbf{Dataset} & BRCA & BLCA & LUAD & CRC & Avg. ($\uparrow$)\\
\midrule
$C_{\histo}=8$ &0.720$\pm$ 0.06 & 0.601$\pm$ 0.04 & 0.592$\pm$ 0.04 & 0.641$\pm$ 0.11& 0.639\\
$C_{\histo}=16$ & 0.669$\pm$ 0.12 & 0.593 $\pm$ 0.06 & 0.600$\pm$ 0.04 & 0.646$\pm$ 0.11 & 0.627\\
$C_{\histo}=32$ &0.680 $\pm$ 0.09 & 0.590$\pm$ 0.05 & 0.587$\pm$ 0.04 & 0.617$\pm$ 0.13 & 0.619\\
\bottomrule
\end{tabular}
\label{tab:numproto_ablation}
\end{table*}

\begin{table*}[!ht]
\centering
\small
\caption{\textbf{Ablation on the histology encoder} Unimodal MMP was trained on different histology encoders for select cancer types.}
\begin{tabular}{l|c|c|c|c|c}
\toprule
\textbf{Dataset} & BRCA & BLCA & LUAD & CRC & Avg. ($\uparrow$)\\
\midrule
ResNet50 & 0.574$\pm$ 0.11 & 0.511$\pm$ 0.05 & 0.600$\pm$ 0.06 & 0.534$\pm$ 0.18 & 0.555 \\
CTransPath & 0.653$\pm$ 0.10 & 0.566$\pm$ 0.05 & 0.578 $\pm$ 0.02 & 0.574 $\pm$ 0.14 & 0.593\\
UNI & 0.669 $\pm$ 0.12 & 0.593 $\pm$ 0.06 & 0.600 $\pm$ 0.04 & 0.646 $\pm$ 0.11 & 0.627\\
\bottomrule
\end{tabular}
\label{tab:encoder_ablation}
\end{table*}

\section{Survival loss ablation experiment}\label{sec:surv_ablation}
We assess how the train batch size affects the performance, using the Cox proportional hazards loss~\cite{cox1972regression} and NLL survival loss~\cite{zadeh2020bias} (\textbf{Table~\ref{tab:survival_loss}}). To this end, we use the $\ours$ full model. 
We observe that the C-Index increases with a larger batch size until it reaches the peak and starts to decline, regardless of the loss function (peak for Cox loss: 0.665 with batch size 64 and NLL loss: 0.644 with batch size 16). The increase can be attributed to stable training from having more patients in each batch to compare the predicted risks against~\cite{kvamme2019time}. The decrease is likely due to a smaller number of parameter updates within the same number of epochs. This suggests the benefits of batch-based training for survival prediction, which does not apply to non-prototype-based approaches as they rely on the NLL survival loss with a single patient batch due to large $N_\histo$. We also observe that employing the Cox loss gives a better overall performance, which illustrates another benefit of forming a batch of patients in $\ours$ with fewer tokens. We attribute the lower performance of NLL survival loss to the discretization of time into non-overlapping coarse bins, which might result in discarding valuable survival information.

\begin{table}[!ht]
\centering
\small
\caption{\textbf{Batch size ablation.} Average C-Index across 5 cross-validation folds with varying batch sizes of patients with Cox and NLL loss. A batch of a single patient cannot be used for Cox loss. }
\begin{tabular}{ll|c|c|c|c|c|c|c}
\toprule
&& BRCA & BLCA & LUAD & STAD & CRC & KIRC & Avg.$(\uparrow)$ \\
\midrule
\parbox[t]{0mm}{\multirow{5}{*}{\rotatebox[origin=c]{90}{{\textbf{Cox}}}}} &
$B=1$& N/A & N/A & N/A & N/A & N/A & N/A & N/A\\
&$B=16$ & 0.711 & \textbf{0.642} & 0.648 & 0.558 & \textbf{0.635} & 0.730 & 0.654\\
&$B=32$ & 0.729 & 0.636 & \textbf{0.648} & 0.584 & 0.627 & 0.735 & 0.660\\
&$B=64$ & \textbf{0.738} & 0.635 & 0.645 & \textbf{0.598} & 0.630 & \textbf{0.744} & \textbf{0.665}\\
&$B=128$ & 0.729 & 0.622 & 0.644 & 0.586 & 0.617 & 0.731 & 0.655\\
\midrule
\parbox[t]{0mm}{\multirow{5}{*}{\rotatebox[origin=c]{90}{{\textbf{NLL}}}}} &
$B=1$ & 0.664 & 0.602 & 0.616 & 0.508 & 0.627 & \textbf{0.712} & 0.621\\
&$B=16$ & \textbf{0.662} & \textbf{0.635} & \textbf{0.656} & 0.561 & \textbf{0.660} & 0.691 & \textbf{0.644}\\
&$B=32$ & 0.618 & 0.622 & 0.646 & \textbf{0.570} & 0.554 & 0.690 & 0.617\\
&$B=64$ & 0.590 & 0.616 & 0.635 & 0.556 & 0.574 & 0.678 & 0.608\\
&$B=128$ & 0.587 & 0.610 & 0.623 & 0.523 & 0.523 & 0.640 & 0.584\\
\bottomrule
\end{tabular}
\label{tab:survival_loss}
\end{table}

\begin{figure}[!ht]
   \centering
   \includegraphics[width=1\linewidth]{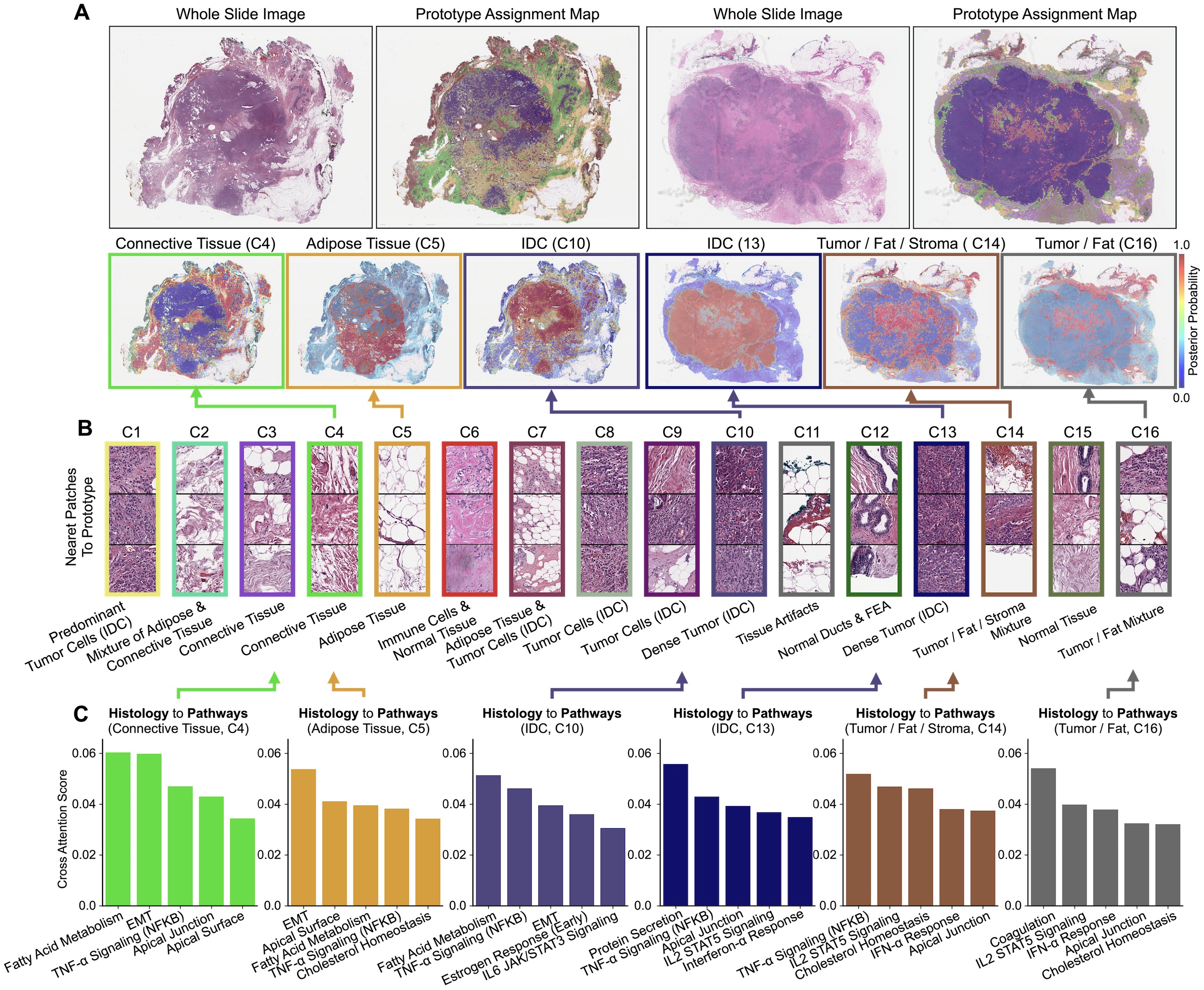}
   \vspace{-5mm}
   \caption{\textbf{Cross-modal interaction visualization}. (A) BRCA WSIs with their prototype assignment map (categorical assignment of each histology patch to their nearest prototype), and prototype heatmaps of the the top-3 prominent tissue patterns in the WSI. (B) Morphological annotations provided by a board-certified pathologist of the nearest histology patches for each prototype. (C) For each prototype visualized in (A), we can visualize its most highly-attended pathways (h. $\rightarrow$ g.), \textit{i.e.}, which pathways correspond to the queried prototype (pathway importance).}
   \label{fig:heatmap_supp_BRCA}
\end{figure}

\section{Additional interpretability results}\label{sec:heatmaps}

    \textbf{Unimodal:} Compared to \textbf{Fig.~\ref{fig:heatmap}}, \textbf{Fig.~\ref{fig:heatmap_supp_BRCA}A} visualizes multiple prototype heatmaps, illustrating how different prototypes reflect distinct morphological tissue patterns in the tumor microenvironment. Using their nearest histology patches, a board-certified pathologist qualitatively assessed and captioned each prototype with a general morphological description. We found that using $C_\histo=16$ may still have redundancy in unique prototypes, as multiple prototypes are found to describe IDC presence (C1, C8, C9, C10, C13). In general, however, each prototype was still found to be semantic in delineating general tumor tissue, normal connective tissue and stroma, adipose tissue, and tissue with immune cell presence, which is reflected in the high performance of unimodal MMP over other histology baselines found in \textbf{Table~\ref{tab:survival_main}} (with MMP using $C_\histo=8$ and $C_\histo=32$ having worse performance in \textbf{Table~\ref{tab:survival_ablation}}).

\textbf{Multimodal:} In \textbf{Fig.~\ref{fig:heatmap_supp_BRCA}C}, we further visualize cross-modal histology to pathway interactions (h. $\rightarrow$ g.) in BRCA, a unique capability in MMP compared to other works which have only visualized pathway to histology interactions (g. $\rightarrow$ h.) \cite{chen2021multimodal,xu2023multimodal,jaume2024modeling}. Across all (h. $\rightarrow$ g.) visualizations for the prototypes shown in \textbf{Fig.~\ref{fig:heatmap_supp_BRCA}C}, fatty acid metabolism and cholesterol homeostasis were conserved in having high cross-attention scores, which corroborates with biomedical literature on how cancer cells hijack these pathways for exogenous energy uptake from the tissue microenvironment (enabling tumorigenesis and cancer progression) \cite{nelson2014cholesterol,koundouros2020reprogramming}. Other conserved and highly-attended pathways include tumor necrosis factor (TNF)-$\alpha$ signaling and epithelial-mesenchymal transition (EMT), which are canonical markers related to tumor proliferation and invasion \cite{wu2010tnf,dongre2019new}. We note that SurvPath also found EMT to have high importance, however, we note a subtle difference in that EMT importance was derived from attribution-based interpretability with respect to predicted survival risk, and not via cross-attention that pinpoints a relationship with an exact morphological pattern. In MMP, we find that EMT not only attends to invasive tumor (C10), but is also the most highly-attended pathway to adipose tissue (C5), which corroborates with recent and accumulating evidence of adipose tissue being more than a causal observer in contributing to inflammation and tumor progression \cite{wang2012adipose,olea2018signaling,giudetti2019specific,ishay2019gain,olea2020new,loo2021fatty}.

\section{Limitations \& recommendations for future directions}\label{sec:app_limitations}

\textbf{Multimodal interpretability:} Due to potential redundancy of prototypes (corresponding to unique morphological patterns), queries for (h. $\rightarrow$ g.) are not unique, with many prototypes associated with tumor cell presence of IDC morphology and thus querying similar pathways. In \textbf{Fig.~\ref{fig:heatmap}}, we also note potential asymmetrical relationships in histology-pathway correspondences, in which cholesterol homeostasis highly attends to C13 (top 3 pathways out of 50) but C13 does not attend as highly to cholesterol homeostasis (top 6 prototypes out of 16). Again, this may be due to the redundancy of prototypes, with other IDC-related prototypes (C8) highly attending to cholesterol homeostasis instead. We note that though pathologist annotation found many clusters to correspond to similar morphological patterns for tumors, there may exist subtle differences in fine-grained features such as tumor grade, tumor invasiveness, tumor colocalization with stroma, adipose tissue, and immune cells which may have fine-grained interactions to pathways. Future directions include developing approaches that would narrow down the number of unique prototypes, which may improve both survival modeling and cross-modal interpretability.

\textbf{Study designs involving TCGA}: The TCGA is the largest publicly-available pan-cancer atlas with paired histology-omic samples, and has been an immeasurable resource for the CPath community in building computational tools for unimodal and multimodal cancer prognosis. Still, the TCGA has several limitations which we provide caution. First, in addition to issues such as site-specific H\&E intensity bias \cite{howard2021impact}, and demographic bias \cite{vaidya2024demographic}, pretrained encoders developed on the TCGA should also be avoided when evaluating multimodal cancer prognosis tasks due to potential issues in data contamination. Though UNI was not pretrained on TCGA \cite{chen2024towards}, using UNI (or any pretrained ROI encoder) as a part of non-parametric methods such as K-means clustering or GMMs may still lead to instances where all patches can be assigned to a single prototype, as demonstrated in $\textsc{PANTHER}$ \cite{song2024morphological}. Second, important consideration must be taken in utilizing the different survival endpoints available for each TCGA cohort. For instance, the median time-to-event and time-to-censor for disease-specific survival TCGA-BRCA is 26 and 25 months respectively, meaning that the follow-up time is too short to see breast cancer-specific deaths. Other works which have assessed the suitability of DSS as a survival endpoint in TCGA-BRCA were able to still show statistically significant differences between ER+ and ER- tumors, while also acknowledging its shortcomings \cite{liu2018integrated}.

\textbf{Unimodal versus multimodal survival analysis}: As emphasized in the \textbf{Introduction} and \textbf{Related Work} sections, multimodal survival analysis is a challenging clinical task that has seen significant interest in the biomedical,  computer vision, and machine learning communities. Though multimodal integration generally outperforms unimodal baselines, we note that the development of better unimodal baselines may (or may not) close the performance gap for certain cancer types, which is an area of further exploration. In PORPOISE \cite{chen2022pan} and MCAT \cite{chen2021multimodal}, multimodal integration (using ResNet50 features transferred from ImageNet for histology and gene families from MutSigDB for genomics) was found to improve in 9 out of 14 cancer types in the TCGA, with genomics generally outperforming histology in unimodal baselines. In SurvPath \cite{jaume2024modeling}, MOTCat \cite{xu2023multimodal} and PIBD \cite{zhang2024prototypical}, which improved unimodal baselines in MCAT using CTransPath features and hallmark gene family features, also found very similar trends with multimodal improvement. Interestingly, MCAT was shown to lag behind unimodal genomics in the analysis of SurvPath, which may be attributed to not only stronger gene features used, but also higher computational complexity with the increased number of omics tokens used for Transformer attention (thus necessitating computational efficiency). In MMP, which improves the unimodal histology baseline further using UNI features, we observe that the unimodal ablation of MMP (based on GMM, 0.611 overall C-Index) is able to catch up with unimodal genomics baselines (0.612 to 0.614 overall C-Index) and also with multimodal baselines like MCAT (0.610 overall C-Index) (\textbf{Table~\ref{tab:survival_main}}). We hypothesize that this is due to the simplicity of GMMs in representing WSIs as a fixed set of prototypes, which thus allows supervision using the Cox loss instead of the negative log-likelihood loss. As better unimodal baselines are developed, we envision new types of multimodal fusion techniques will also be needed that would emphasise simplicity and interpretability in developing easy-to-train survival methods in high-dimensional, low-sample size regimes for cancer prognostication.

\end{document}